\documentclass{article} 
\usepackage[final]{colm2026_conference}

\usepackage{microtype}
\usepackage{hyperref}
\usepackage{url}
\usepackage{booktabs}

\usepackage{lineno}

\definecolor{darkblue}{rgb}{0, 0, 0.5}
\hypersetup{colorlinks=true, citecolor=darkblue, linkcolor=darkblue, urlcolor=darkblue}

\title{PerMix-RLVR: Preserving Persona Expressivity under\\Verifiable-Reward Alignment}

\author{
Jihwan Oh$^{1}$\thanks{Equal contributions}\quad
Soowon Oh$^{1\,2}$\footnotemark[1]\quad
Murad Aghazada$^{1}$\quad
Minchan Jeong$^{1}$\quad \\[0.2em]
\textbf{MyeongSeok Kang}$^{3}$\quad 
\textbf{Sungnyun Kim}$^{1}$\thanks{Corresponding authors}\quad
\textbf{Se-Young Yun}$^{1}$\footnotemark[2]
\\[0.5em]
$^{1}$KAIST AI\quad
$^{2}$Samsung Advanced Institute of Technology\quad
$^{3}$Seoul National University\quad
\\[0.25em]
}

\usepackage{wrapfig}
\usepackage{amsmath, amssymb, amsthm, bm}
\usepackage{graphicx}
\usepackage{subcaption}
\usepackage{multirow}
\usepackage{array}
\usepackage{tabularx}
\usepackage{enumitem}
\usepackage{makecell}   
\usepackage{ragged2e}   

\usepackage[most]{tcolorbox}
\usepackage{xcolor}
\usepackage[table]{xcolor}
\usepackage[ruled,vlined]{algorithm2e}

\newcommand{\PSS}{\operatorname{PSS}}
\definecolor{takeawaypurple}{HTML}{7CBAD8}
\definecolor{takeawaypurplelight}{HTML}{EBF6FC}
\newtheorem{proposition}{Proposition}

\newtcolorbox{takeawaybox}[1][]{
  enhanced,
  breakable,
  colback=takeawaypurplelight,
  colframe=takeawaypurple,
  boxrule=0.8pt,
  arc=2mm,
  left=1.2mm,
  right=1.2mm,
  top=2.5mm,
  bottom=1.2mm,
  fonttitle=\bfseries,
  coltitle=white,
  colbacktitle=takeawaypurple,
  title=Takeaways,
  attach boxed title to top left={xshift=2mm,yshift=-3mm},
  boxed title style={
    arc=1.5mm,
    boxrule=0pt,
    colframe=takeawaypurple,
    colback=takeawaypurple,
    top=0.2mm,    
    bottom=0.2mm, 
    left=1mm,   
    right=1mm,  
  },
  #1
}

\usepackage{tikz}
\usetikzlibrary{positioning}

\usepackage{caption}
\begin{document}

\ifcolmsubmission
\linenumbers
\fi

\maketitle

\begin{abstract}
Persona prompting has been widely adopted to steer large language models (LLMs) behavior and improve their instruction performance by assigning specific characters. However, identifying an optimal persona is time-consuming, and its impact on output quality remains poorly understood. Prior work has mainly addressed this issue at the prompt level via inference-time strategies, incurring additional computation. In this work, we avoid inference-time prompt search by tackling persona sensitivity during training, aiming to train models that adapt their behavior to diverse personas while preserving task performance. In particular, we find that reinforcement learning with verifiable rewards (RLVR) systematically reduces sensitivity to persona prompts, but also reveals an inherent trade-off of outcome-based optimization: while RLVR improves robustness on tasks with verifiable goals, it can also degrade persona expressivity when needed, e.g., in-character role-playing. To address this limitation, we propose \textbf{PerMix-RLVR}, a persona-mixed RLVR strategy that mitigates the persona robustness–fidelity trade-off, preserving strong robustness to harmful persona variation while enabling faithful persona adoption when required. Concretely, PerMix-RLVR improves persona stability score (PSS) over RLVR by +21.2\% on MATH500, while also enhancing persona fidelity by +11.4\% on PersonaGym.
\end{abstract}
\section{Introduction}
Persona prompting, assigning a role such as \textit{``You are a mathematical expert''}, is widely used to interact with LLMs \citep{pataranutaporn2021ai, bai2022training, liu2023pre, shanahan2023role, kong2023better, luo2024personamath, luz2025helpful, sandwar2025town}. A recurring observation across recent studies is that an appropriate persona can substantially improve task performance, yet the outcome is highly unreliable: the same strategy may also cause severe degradation when the persona is mismatched or poorly specified. For instance, \citet{zheng2024helpful} show that automatically selecting the best persona is difficult, with selection strategies often performing no better than random. Likewise, \citet{kim2024persona} and \citet{luz2025helpful} characterize persona prompting as a \textit{double-edged sword}; high upside, but a non-trivial risk of harming performance.

This volatility makes persona prompting expensive to use in practice. As shown in \autoref{fig:motiv}, (a, b): the task-aligned expert persona often achieves the best performance among the tested variants, and (c): performance further improves as users explore different persona variations. However, the best persona cannot be reliably predicted in advance: one must run full evaluations, wait for the final results, and repeat trial-and-error across multiple prompts to identify a strong choice. To address this, prior work has largely pursued prompt-level interventions. For example, \citet{kim2024persona} propose an ensemble approach that generates both persona-conditioned and neutral responses and then selects between them using an LLM-based evaluator. 

\begin{figure}[t]
\begin{center}
\centering
\small
\centerline{\includegraphics[width=0.9\textwidth]{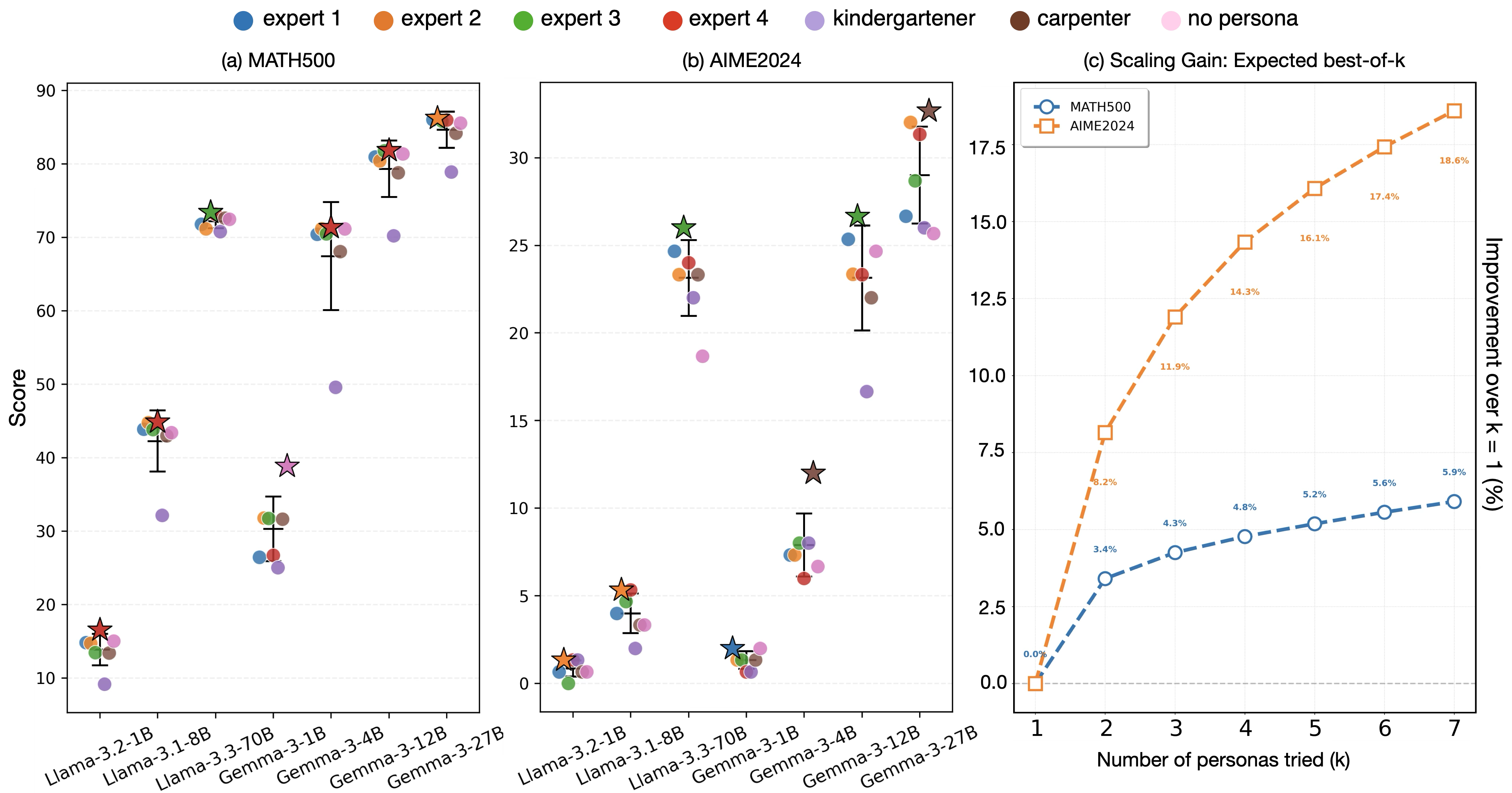}}
\end{center}
\vspace{-15pt}
\caption{\textbf{Persona lottery and the cost of prompt search.}
(a) \textbf{MATH500} and (b) \textbf{AIME2024} show Llama3 and Gemma3 performances under seven different personas (four math \textit{experts}, \textit{kindergartener}, \textit{carpenter}, and \textit{no persona}). Each dot denotes persona; $\star$ and $\circ$ mark the best and worst persona per model, and black bars show mean $\pm$ std across personas, revealing large persona-induced variance. (c) \textbf{Expected best-of-$k$.} We report the expected best score after trying $k$ personas (without replacement), averaged across models.}
\label{fig:motiv}
\end{figure}

We argue for a paradigm shift. Rather than asking how to find the optimal persona for a persona-sensitive model, we ask: \emph{why not find a model that is intrinsically robust to persona variations in the training-time?} From a user perspective, this training-time approach is strictly more attractive: it promises reliable performance without inference-time meticulous prompt engineering or the uncertainty of trial-and-error. Importantly, we contend that persona sensitivity is not an unavoidable or random artifact. Instead, we hypothesize that robustness to persona prompts is closely tied to the model's post-training objective.

Our empirical investigation reveals a clear behavioral divergence: models trained with reinforcement learning with verifiable rewards (RLVR) \citep{lample2019deep, cobbe2021training, le2022coderl, lewkowycz2022solving, lightman2023let} exhibit remarkable robustness to persona variations, whereas alternative post-training recipes we study (e.g., supervised fine-tuning or distillation-based training) show substantially higher sensitivity. We validate this pattern across complex mathematical reasoning benchmarks, model families, and task difficulties.

Besides, our analysis highlights that RLVR alignment emphasizes stability on the tasks with verifiable goals, motivating complementary approaches when persona expressivity is required. This observation frames persona robustness and role-playing fidelity as distinct axes of model behavior, guiding the design of training strategies that balance robustness with controlled persona adoption. 

Our contributions are summarized as follows:
\begin{itemize}[leftmargin=*]
    \vspace{-5pt}
    \item We identify the post-training alignment objective as a key determinant of persona sensitivity in LLMs, showing that models trained with RLVR exhibit systematically reduced sensitivity to persona prompts.
    \vspace{-1pt}
    
    \item We provide theoretical and empirical analysis showing that the RLVR alignment induces persona robustness in goal-oriented tasks while attenuating persona expressivity in role-playing settings, revealing a trade-off between stability and expressivity.
    \vspace{-1pt}
    
    \item We propose \textbf{PerMix-RLVR}, a training strategy that aims to preserve robustness to harmful persona variation while recovering persona expressivity when required. We demonstrate that this strategy improves standard RLVR on math reasoning and role-playing tasks.
\end{itemize}
\section{Related Work}

\subsection{Prompt Engineering for Persona-Aligned Reasoning}

The advent of large-scale pre-trained language models has established prompting as the primary paradigm for interacting with these models \citep{qiu2020pre, mann2020language, gao2020making, lester2021power}. While early work focused on simple template design for few-shot learning, the need for more complex reasoning led to a seminal breakthrough with Chain-of-Thought (CoT) prompting. Parallel to the development of reasoning-focused techniques, \textit{persona prompting} has become a widely disseminated practice in the community. This strategy involves assigning the LLM a specific role or identity, such as an expert, a character, or even an object, with the goal of steering its knowledge, tone, and response style \citep{gu2023systematic, shao2023character, bubeck2023sparks}. The technique is frequently recommended in guides for controlling model behavior and is implicitly used in various applications, from building specialized chatbots to enhancing the safety and alignment of models by instructing them to be helpful and harmless \citep{bai2022training}. The underlying intuition is that a persona helps to contextualize the task, thereby activating a more relevant and effective subspace of the model's vast parametric knowledge. Despite its widespread adoption, persona prompting remains an under-studied phenomenon in a rigorous academic context \citep{battle2024unreasonable, shanahan2023role}. Most empirical studies that employ personas often do so in conjunction with other detailed instructions.

\subsection{Reinforcement Learning with Verifiable Rewards}
Reinforcement learning with verifiable rewards (RLVR) \citep{shao2024deepseekmath, guo2025deepseek, wen2025reinforcement, mroueh2025reinforcement, vojnovic2025alignment} has emerged as a practical paradigm for improving reasoning in domains where correctness can be automatically checked, such as mathematics and code generation. By optimizing policies against an explicit verifier, RLVR can improve task accuracy and calibration without requiring human preference labels. Recent toolkits \citep{vonwerra2020trl, sheng2024hybridflow, feng2025group} and algorithmic variants \citep{shao2024deepseekmath, yu2025dapo, zheng2025group} make RLVR increasingly accessible for post-training open models. Despite this progress, existing RLVR research has primarily focused on improving average task performance and sample efficiency, and has paid comparatively little attention to robustness under systematic prompt perturbations, such as persona changes in the system prompt. Our work bridges this gap by analyzing persona sensitivity through the lens of RLVR.
\section{RLVR Alignment and Persona Robustness \& Fidelity}
\label{sec:prelim}

In this section, we show that RLVR improves persona robustness on verifiable tasks, but can attenuate persona fidelity (i.e., reduce how faithfully the model stays in the character) in settings where persona expressivity is desired.

\subsection{Empirical Evidence: RLVR-Trained Model's Persona Robustness-Fidelity Trade-off}
\label{sec:open-source_exp}

We examine an overall trend by analyzing publicly available post-trained models across different families and scales. Our goal here is not to isolate individual factors, but to observe the macroscopic patterns that emerge from real-world alignment pipelines as they are released. Comparing RLVR and non-RLVR pipelines reveals systematic differences in persona sensitivity and suggests a robustness–fidelity trade-off, where RLVR can improve stability yet reduce adherence to persona prompts. Section~\ref{sec:method} subsequently validates this observation with controlled experiments.

\paragraph{Datasets and models.} We utilize two representative math benchmarks, MATH500 \citep{hendrycks2021measuring} and AIME2024 \citep{aime2024}. For the evaluation of publicly available post-trained models, we examine three primary model families: Qwen3 (0.6B, 1.7B, 4B, 8B, 32B) \citep{yang2025qwen3}, Llama3.1\,\&\,3.2 (1B, 8B, 70B) \citep{grattafiori2024llama}, and Gemma3 (1B, 4B, 12B, 27B) \citep{team2025gemma}. We note that Qwen3-32B has been post-trained with RLVR, and the other Qwen3 models have been distilled from Qwen3-32B/235B-A22B without additional post-training, as highlighted in \citet{yang2025qwen3}. Additionally, the Gemma3 models undergo post-training with reinforcement learning using mixed reward signals, whereas the Llama models do not include RLVR in their post-training pipeline.

\paragraph{Prompt design.}
For the evaluation of persona robustness, we use a fixed set of system personas, $|\mathcal{P}_{\text{eval}}|=16$, spanning four categories (STEM experts, education levels, character traits, and job roles), and measure performance under each persona setting. We provide the full persona pool and prompt templates (including the disjoint training persona set $\mathcal{P}_{\text{train}}$ used in Section~\ref{sec:method}) in Appendix~\ref{app:sec:prompt_construction}.

\paragraph{Metric.} We introduce Persona Stability Score (PSS) which measures how the performance varies according to the persona prompt. We define PSS as follows:
\begin{equation}
  \PSS(m,d)
  \;=\;
  \frac{\displaystyle \min_{p \in \mathcal{P}} \; a_{m,p,d}}
       {\displaystyle \max_{p \in \mathcal{P}} \; a_{m,p,d}}
  \;\in\; [0,1]
\end{equation}
\noindent 
where $a_{m,p,d}\in[0,1]$ is the performance (e.g., accuracy) of a model $m$ with a persona $p$ on a dataset $d$.
By construction, larger values (i.e., closer to $1$) indicate \emph{persona-stable} behavior—performance is similar across personas—whereas smaller values (i.e., closer to $0$) indicate \emph{persona-sensitive} behavior with large disparities between the best and worst personas. Given a dataset of $N$ instances $\{x_i\}_{i=1}^N$, we generate a single output $y_i \sim \pi(\cdot \mid x_i)$ for each instance. We report the single-sample accuracy, equivalent to $\mathrm{Pass@}1$.

\begin{figure*}[t!]
\vspace{-10pt}
\centering
\small
\centerline{\includegraphics[width=0.75\textwidth]{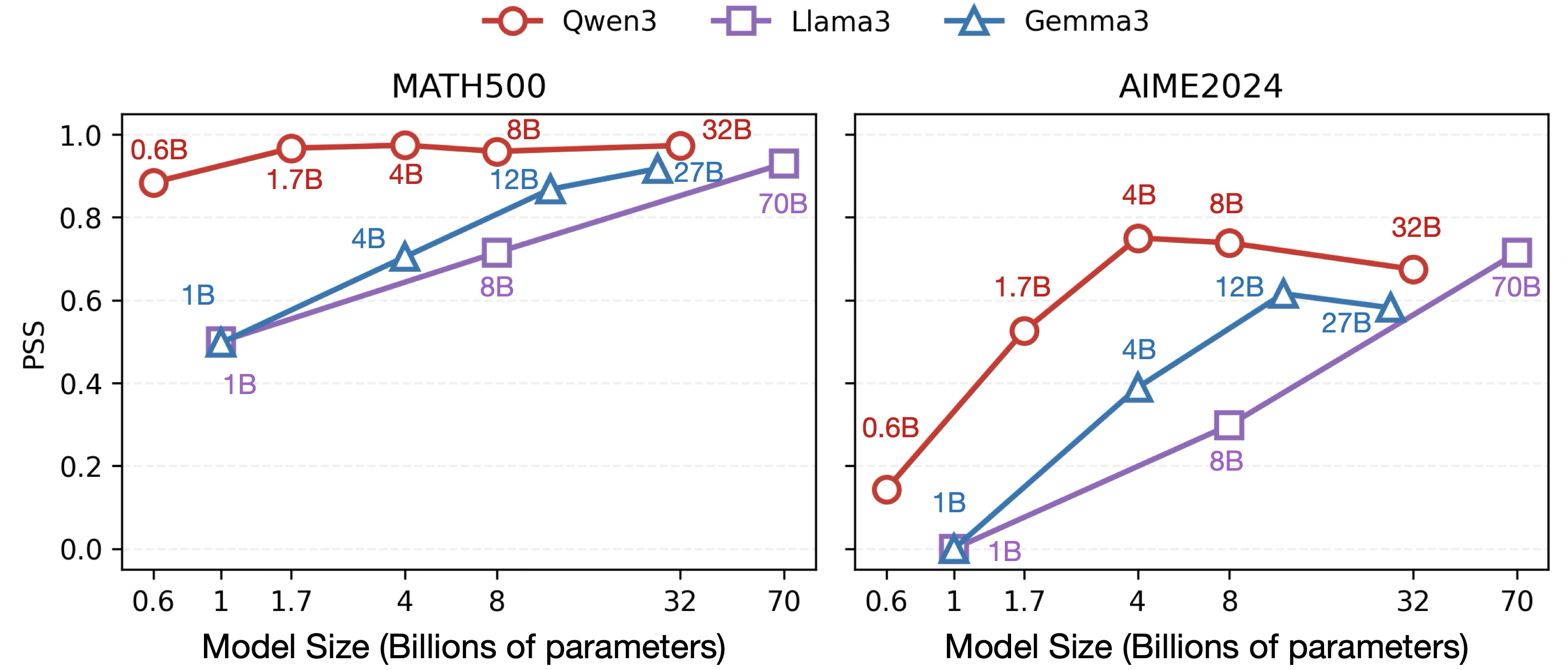}}
\caption{\textbf{PSS comparison across model families and scales.} RLVR-trained families consistently exhibit higher PSS, indicating greater robustness to persona-induced variations. Full results are provided in Appendix~\ref{app:sec:ablation_study}.}
\label{fig:pss_graph}
\vspace{-5pt}
\end{figure*}

\paragraph{RLVR-trained model shows the highest PSS.}
The post-training recipe appears to correlate with persona robustness as shown in Figure~\ref{fig:pss_graph}. Qwen3 models consistently achieve higher PSS than Llama3 and Gemma3 across both benchmarks. Given public technical reports, this pattern is consistent with the hypothesis that RLVR may improve robustness to persona variation. Gemma3 shows a modest stability advantage over Llama3, which may reflect its hybrid post-training recipe combining reward-model-based optimization with correctness-verifiable signals, whereas Llama3 primarily relies on non-RLVR post-training.

\begin{wraptable}{r}{0.52\textwidth}
\centering
\small
\vspace{-10pt}
\addtolength{\tabcolsep}{-2pt}
\resizebox{0.52\textwidth}{!}{
\vspace{-5pt}
\begin{tabular}{lccc}
\toprule
\textbf{Model} & \textbf{WR} & \textbf{WR\,(no-tie)} & \textbf{Net margin} \\
\midrule
Llama3.1 (non-RLVR) & $50.2_{\,\pm\,9.0}$ & $60.3$ & $+17.2$ \\
Qwen3 (RLVR)           & $33.0_{\,\pm\,6.7}$ & $39.7$ & $-17.2$ \\
Tie                   & $16.8_{\,\pm\,4.3}$ & --     & --      \\
\bottomrule
\end{tabular}
}
\vspace{-5pt}
\caption{\small{\textbf{Pairwise LLM-judge comparison under the \textit{kindergartener} persona on MATH500.}
WR denotes the win rate\,(\%), and WR (no-tie) is computed by excluding ties: $\text{win}/(\text{win}+\text{loss})$ using mean counts. 
Net margin is $\text{win}-\text{loss}$ (mean).}}
\label{tab:kindergarten_pairwise_wrap}
\vspace{-10pt}
\end{wraptable}

\paragraph{RLVR attenuates persona fidelity.}
To disentangle persona fidelity from task competence, we run an additional analysis on MATH500 to compare which post-trained model better preserves the intended persona while solving the same problems. Although RLVR improves robustness on verifiable tasks, it can weaken persona fidelity where in-character behavior is essential. In Table~\ref{tab:kindergarten_pairwise_wrap}, we used an LLM judge, DeepSeek-v3.2 \citep{liu2025deepseek}, to assess which model produced more \textit{kindergartener}-like outputs given the \textit{kindergartener} persona prompt. It strictly preferred the outputs of Llama3.1-8B-Instruct over the RLVR-trained Qwen3-32B, which is much larger than Llama, in 50.2\% of pairwise comparisons. 
This suggests the outcome-driven optimization can favor standardized problem-solving over persona-specific cues, motivating complementary training when faithful persona adoption matters.

\begin{takeawaybox}[title=Takeaway 1: Persona Robustness–Fidelity Trade-off]
\begin{enumerate}[label=(\alph*), leftmargin=1.5em, itemsep=0.2em, topsep=0.2em]
    \item \textbf{Post-training objectives can pull persona behavior in different directions: }RLVR pipelines correlate with higher persona robustness (PSS) on verifiable tasks.
    \item \textbf{RLVR may attenuate persona fidelity in role-playing: }an RLVR-trained model is judged less “kindergartener-like” than a non-RLVR baseline.
\end{enumerate}
\end{takeawaybox}

To better understand the mechanism behind Takeaway 1, we mathematically formalize RLVR and its interaction with persona prompts.
 
\subsection{Theoretical Background: Reinforcement Learning with Verifiable Rewards}
\label{sec:rlvr_background}
 
Let $x$ denote a problem and $p$ a persona prompt.
The model generates an output $y$, from which we extract a reasoning trajectory $z$ and final answer $a(z)$.
The verifier $V(z) = V(a(z)) \in \{0, 1\}$ depends only on the answer; we write $\mathcal{C}(x) = \{z : V(z) = 1\}$ for the correct set.
Standard RLVR optimizes over $x$ without persona:
\begin{equation}
\label{eq:rlvr_objective}
    J(\pi) = \mathbb{E}_{z \sim \pi(\cdot|x)}\!\big[V(z)\big] - \beta\, D_{\mathrm{KL}}\!\big(\pi(\cdot|x)\,\|\,\pi_{\mathrm{ref}}(\cdot|x)\big).
\end{equation}

The solution $\pi^*$ of Eq.~\eqref{eq:rlvr_objective} reweights the reference 
density by $e^{1/\beta}$ on correct trajectories (Appendix~\ref{app:theory}). 
This reweighting depends only on the answer, not on the reasoning path or its 
style. However, because different styles lead to different correctness rates, 
the answer-based reweighting implicitly acts as a filter over styles, a 
mechanism we formalize next.
 
\subsection{Theoretical Motivation: RLVR as a Style Competence Filter}
\label{sec:theory}

We present a stylized model for persona robustness under RL on verifiable tasks. Our analysis shows that the answer-based RL acts as a reweighting mechanism of a style prior by the competence factor. We use this framework to interpret the empirical behavior of persona-conditioned training, as shown in Figure~\ref{fig:concept}.

\paragraph{Style model.}
For analysis, we introduce a stylized latent-variable model of the output and write $y\!=\!(s, z)$, where $s$ denotes an unobserved style variable and $z$ denotes the reasoning trajectory that determines the final answer. We assume the factorization
\[
\pi(y \mid c) = \pi(s \mid c)\,\pi(z \mid s, x),
\]
where $c$ denotes the conditioning context ($c\!=\!x$ for standard RLVR, and $c\!= \!(x,p)$ for persona-conditioned RLVR). Here, the persona prompt $p$ shifts the style prior $\pi(s \mid c)$, while $\pi(z \mid s,x)$ depends on the style-problem pair, not directly on the persona identity. This provides a tractable abstraction for analyzing persona robustness on verifiable tasks.
 
\paragraph{Filtering mechanism.}
Under this factorization, RLVR performs a Bayesian update on the style distribution (detailed in Appendix~\ref{app:style_posterior}):
\begin{equation}
\label{eq:style_posterior}
    \pi^*(s \mid c) \;\propto\; \underbrace{\pi_{\mathrm{ref}}(s \mid c)}_{\text{style prior}} \;\cdot\; \underbrace{\big[1 + (e^{1/\beta} - 1)\,\mu(s, x)\big]}_{\text{style competence}},
\end{equation}
where $\mu(s, x) = \pi_{\mathrm{ref}}(\mathcal{C}(x) \mid s, x)$ is the probability that the reference model reasons correctly under style $s$ on problem $x$. The competence term increases monotonically with $\mu(s,x)$: styles that support correct reasoning are upweighted, while styles with $\mu(s,x) \approx 0$ receive little mass and are filtered out as $\beta \to 0$. Under this stylized model, the resulting filter depends on the style--problem pair rather than directly on the persona identity.

\begin{figure*}[t!]
\vspace{-10pt}
\centering
\small
\centerline{\includegraphics[width=0.90\textwidth]{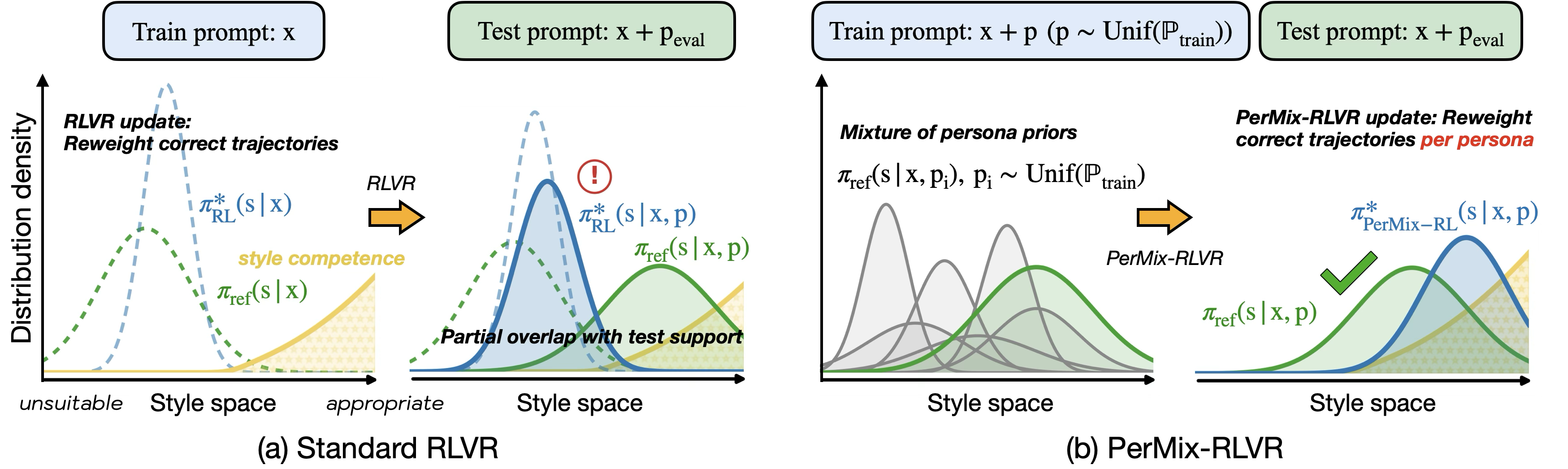}}
\caption{\textbf{Standard RLVR vs. PerMix-RLVR.} (a) RLVR reweights $\pi_{\mathrm{ref}}(s|x)$ toward competent styles, yielding $\pi_{\mathrm{RL}}(s|x)$. A test-time persona shifts the prior to $\pi_{\mathrm{ref}}(s|x,p)$, causing support mismatch for standard RLVR. (b) PerMix-RLVR is trained on diverse persona-conditioned priors to produce $\pi_{\text{PerMix-RL}}(s|x,p)$ aligned with persona-shifted support. This indicates better coverage under the persona-shifted prior $\pi_{\mathrm{ref}}(s|x,p)$. The yellow hatched region represents style competence.}
\label{fig:concept}
\end{figure*}

\paragraph{Standard RLVR vs.~Persona-conditioned RLVR.}
Eq.~\ref{eq:style_posterior} upweights styles with high $\mu(s,x)$ and suppresses those with $\mu(s,x) \approx 0$. Standard RLVR ($c = x$) learns this reweighting under the reference style prior $\pi_{\mathrm{ref}}(s \mid x)$, which concentrates on the model's default styles. At test time, a persona prompt shifts the style prior, which may place mass on styles 
outside the trained styles where the learned competence 
filter has not been calibrated (Figure~\ref{fig:concept}(a)).

Persona-conditioned training ($c = (x,p)$) broadens training exposure to persona-shifted style priors, providing a potential for its stronger robustness and better persona-consistent behavior in our experiments (Section~\ref{sec:method}).

This advantage can be made precise: under the KL-regularized optimal 
policy with $V(z) \in \{0,1\}$, persona-conditioned RLVR admits a 
closed-form expression for persona-conditional accuracy that 
formalizes the robustness effect.
 
\begin{proposition}[Accuracy and Robustness]
\label{prop:acc_robust}
Let $\mu_p = \pi_{\mathrm{ref}}(\mathcal{C}(x)|x,p)$ and $\alpha(\mu, \beta) = e^{1/\beta}\mu \,/\, (e^{1/\beta}\mu + 1 - \mu)$.
Under persona-conditioned RLVR:
\begin{enumerate}
    \item[\textnormal{(i)}] \textbf{Accuracy:} $\mathrm{Acc}(\pi^*, x, p) = \alpha(\mu_p, \beta)$, depending on $p$ only through $\mu_p$.
    \item[\textnormal{(ii)}] \textbf{Improvement:} $\alpha(\mu, \beta) > \mu$ for all $\beta > 0$.
    \item[\textnormal{(iii)}] \textbf{Robustness:} $\PSS(\pi^*) \geq \PSS(\pi_{\mathrm{ref}})$, with $\PSS \to 1$ as $\beta \to 0$ provided $\mu_{\min} > 0$.
\end{enumerate}
\end{proposition}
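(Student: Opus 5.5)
Part (i) comes from the closed form of the KL-regularized optimum. For each context $c=(x,p)$ the objective is the persona-conditioned analogue of Eq.~\eqref{eq:rlvr_objective}. Its maximizer is the Gibbs policy
\[
\pi^*(z\mid x,p)=\frac{\pi_{\mathrm{ref}}(z\mid x,p)\,e^{V(z)/\beta}}{Z(x,p)} .
\]
This follows from the Donsker--Varadhan variational identity, as noted after Eq.~\eqref{eq:rlvr_objective} and in Appendix~\ref{app:theory}. Since $V\in\{0,1\}$, the partition function is
\[
Z(x,p)=e^{1/\beta}\mu_p+(1-\mu_p).
\]
Summing $\pi^*$ over $\mathcal{C}(x)$ gives
\[
\mathrm{Acc}(\pi^*,x,p)=\pi^*(\mathcal{C}(x)\mid x,p)=\frac{e^{1/\beta}\mu_p}{e^{1/\beta}\mu_p+1-\mu_p}=\alpha(\mu_p,\beta).
\]
The right-hand side involves $p$ only through $\mu_p$. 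The style factorization is not needed here, because the reweighting depends only on $V(z)$. It is consistent with Eq.~\eqref{eq:style_posterior}, since marginalizing over $s$ recovers the same normalizer.

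For (ii), write $K=e^{1/\beta}>1$, and take $\mu\in(0,1)$; at the endpoints equality holds, so the strict claim should be read on the open interval. Then
\[
\alpha(\mu,\beta)>\mu \iff K>K\mu+1-\mu \iff (K-1)(1-\mu)>0,
\]
and the last inequality is true.

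For (iii), I would first record that $\alpha(\cdot,\beta)$ is increasing in $\mu$. The derivative is $\partial_\mu\alpha = K/(K\mu+1-\mu)^2>0$. Hence $\min_p\alpha(\mu_p)=\alpha(\mu_{\min})$ and $\max_p\alpha(\mu_p)=\alpha(\mu_{\max})$, so
\[
\PSS(\pi^*)=\frac{\alpha(\mu_{\min})}{\alpha(\mu_{\max})}, \qquad \PSS(\pi_{\mathrm{ref}})=\frac{\mu_{\min}}{\mu_{\max}}.
\]
The inequality then reduces to showing that $\alpha(\mu)/\mu$ is nonincreasing in $\mu$. Compute
\[
\frac{\alpha(\mu)}{\mu}=\frac{K}{1+(K-1)\mu},
\]
which is decreasing in $\mu$ because $K>1$. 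Therefore
\[
\frac{\alpha(\mu_{\min})}{\mu_{\min}}\ge\frac{\alpha(\mu_{\max})}{\mu_{\max}},
\]
and rearranging gives the claim. If $\mu_{\min}=0$, both sides are $0$ and the inequality holds trivially.

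For the limit, when $\mu_{\min}>0$ we have, for every $p$,
\[
\alpha(\mu_p,\beta)=\frac{1}{1+(1-\mu_p)e^{-1/\beta}/\mu_p}\to 1 \quad\text{as } \beta\to 0,
\]
so the ratio tends to $1$. Here $\mathcal{P}$ should be taken finite, or at least $\mu_{\min}$ should be bounded away from zero.

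The main obstacle is the reduction in (iii): identifying that the min and max of $\alpha(\mu_p)$ are attained at the same personas as those of $\mu_p$, via monotonicity, and then that $\alpha(\mu)/\mu$ decreases. The rest is direct algebra. A minor point is that PSS is defined per model and dataset; I would prove the result per problem $x$, treating $a_{m,p,d}$ as the conditional accuracy.
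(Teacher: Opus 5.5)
Your proof is correct and follows the paper's argument essentially step for step: the Gibbs form of the KL-regularized optimum with $Z(x,p)=e^{1/\beta}\mu_p+1-\mu_p$ for (i), elementary algebra for (ii), the decrease of $\alpha(\mu)/\mu = e^{1/\beta}/(e^{1/\beta}\mu+1-\mu)$ for (iii), and the same zero-temperature limit. Your explicit use of the monotonicity of $\alpha$ to identify the extremal personas, and your per-problem reading of $\PSS$, make precise two steps the paper leaves implicit; the per-problem reading genuinely matters, because the bound need not survive averaging accuracies over problems.
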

 
\noindent\textit{Proof.} See Appendix~\ref{app:proof_proposition}. 

The PSS bound in (iii) follows from the concavity of $\alpha$: personas with lower $\mu_p$ benefit more, which compress the performance gap.

\begin{takeawaybox}[title=Takeaway 2: Why Persona-Conditioned RLVR Helps]
\begin{enumerate}[label=(\alph*), leftmargin=1.5em, itemsep=0.2em, topsep=0.2em]
    \item \textbf{RLVR induces a style-selective update:} it favors styles that are more compatible with correct reasoning.
    \item \textbf{This improves persona robustness on verifiable tasks:} compatible persona-induced styles survive, while incompatible ones are suppressed.
    \item \textbf{Persona-conditioned RLVR extends robustness to persona-shifted styles:} by learning the same update under diverse persona-shifted priors, it applies over a broader support.
\end{enumerate}
\end{takeawaybox}
\section{PerMix-RLVR: Persona-Conditioned Verifiable-Reward Alignment}
\label{sec:method}

\subsection{Algorithm Design}

Based on the theoretical foundations laid out in Section~\ref{sec:theory}, we have demonstrated that persona-conditioned training holds significant potential for enhancing persona robustness while mitigating the degradation of persona expressivity typically observed in standard RLVR. Thus, we introduce \textbf{PerMix-RLVR}, a practical framework designed to operationalize persona-conditioned training.

To enable the model to effectively learn a calibrated latent style filter through RLVR, we curate a training persona pool $\mathcal{P}_{\text{train}}$, encompassing a wide range of persona categories. During the training process, for each instance, a persona is extracted via uniform sampling and integrated into the input as a system prompt. Aside from this specialized input formatting, the underlying training methodology remains consistent with standard RLVR. Specifically, we employ the Group Relative Policy Optimization (GRPO)~\citep{shao2024deepseekmath} approach to perform the verifiable-reward alignment. The whole training process is outlined in Algorithm~\ref{alg:permix-rlvr}.

\subsection{Empirical Evaluation}
\vspace{-5pt}
To empirically evaluate the proposed \textbf{PerMix-RLVR}, we design a series of controlled experiments. As established in our earlier analysis, persona sensitivity is a systematic consequence closely linked to a model's post-training pipeline. Therefore, to isolate the causal effect of different post-training strategies, we ensure that all variants originated from the same base checkpoint and are trained under matched training budgets. Through these experiments, we evaluate \textbf{PerMix-RLVR} by comparing it against common post-training paradigms: supervised fine-tuning (SFT), knowledge distillation (KD), and reinforcement learning with verifiable rewards (RLVR).

\begin{table*}[t!]
\centering
\small
\addtolength{\tabcolsep}{-1.5pt}
\resizebox{\linewidth}{!}{
\begin{tabular}{l|cc|cccc|cccc|c}
\toprule
\multirow{2}{*}{\textbf{Post-training}}& \multirow{2}{*}{\textbf{Input}} & \multirow{2}{*}{\textbf{Objective ($\mathcal{J}$)}} & \multicolumn{4}{c|}{\textbf{GSM8K (i.d.)}} & \multicolumn{4}{c|}{\textbf{MATH500 (o.o.d.)}} & \textbf{PersonaGym} \\
 &  &  & \texttt{Worst} & \texttt{Best} & \texttt{Acc.\,Mean} & \texttt{PSS} & \texttt{Worst} & \texttt{Best} & \texttt{Acc.\,Mean} & \texttt{PSS} & \texttt{Consistency} \\
\midrule

Base ($M_0$) & $x$ & -
& 81.1 & 84.9 & 83.5$_{\,\pm\,1.1}$  & 0.956 & 28.6 & 46.0 & 42.5$_{\,\pm\,3.8}$ & 0.619 & 3.19 \\

\midrule

SFT & $x$ & $\mathcal{L}_{CE}(y)$
& 73.4 & 77.2 & 75.8$_{\,\pm\,1.1}$ & 0.951 & 30.3 & 37.2 & 34.6$_{\,\pm\,1.7}$ & 0.805 & 2.53 \\

\midrule

\multirow{2}{*}{PerMix-SFT} & \multirow{2}{*}{$x$+$p$} & $\mathcal{L}_{CE}(y)$
& 74.3 & 76.2 & 75.2$_{\,\pm\,0.5}$ & \underline{0.974} & 33.0 & 35.3 & 34.2$_{\,\pm\,0.6}$ & \textbf{0.880} & 2.88 \\

 &  & $\mathcal{L}_{CE}(y_s)$
& 71.0 & 81.5 & 78.7$_{\,\pm\,2.8}$ & 0.871 & 24.8 & 41.2 & 37.4$_{\,\pm\,3.9}$ & 0.591 & \underline{3.31} \\

\midrule

RLVR & $x$ & $\mathbb{E}[V - \beta D_{KL}]$
& \underline{84.0} & \textbf{87.6} & \underline{86.1}$_{\,\pm\,0.9}$ & 0.959 & \underline{34.0} & \textbf{49.6} & \underline{46.8}$_{\,\pm\,3.8}$ & 0.675 & 3.06 \\

\rowcolor{blue!15}
\textbf{PerMix-RLVR} (ours)& $x$+$p$ & $\mathbb{E}[V - \beta D_{KL}]$
& \textbf{84.9} & \underline{87.0} & \textbf{86.4}$_{\,\pm\,0.6}$ & \textbf{0.975} & \textbf{41.0} & \underline{48.6} & \textbf{47.1}$_{\,\pm\,1.7}$ & \underline{0.818} & \textbf{3.41} \\

\bottomrule
\end{tabular}
}
\caption{\textbf{Comparison of teacher-free post-training strategies on GSM8K and MATH500 (accuracy\,\%, PSS) and PersonaGym (role-playing consistency).} $x$ is an input instance, while $p$ is an input persona prompt. $\mathcal{L}_{CE}$ is a cross-entropy loss on a ground-truth $y$ (or persona-styled $y_s$ rewritten by GPT-5-mini). RLVR uses verifier reward $V$ along with KL regularization $\beta D_{KL}$ for optimization. \textbf{Bold} and \underline{underline} indicate the best and second best performance, respectively.}
\label{tab:controlled_baselines_in_domain_teacher_free}
\end{table*}


\subsubsection{Experiment Setup}
\vspace{-3pt}
\paragraph{Models and benchmarks.}
All variants start from Llama3.1-8B-Instruct ($M_0$) and are trained on the same GSM8K \citep{cobbe2021training} training set.
We evaluate on four benchmarks: GSM8K, MATH500, LiveCodeBench \citep{jain2024livecodebench}, and PersonaGym \citep{samuel-etal-2025-personagym}.
GSM8K serves as the in-distribution evaluation, while MATH500 and LiveCodeBench v6 are held-out reasoning benchmarks used to measure generalization and persona robustness under different system personas.
In contrast, PersonaGym assesses persona-conditioned expressivity. Among its five evaluation tasks, we identify the \textbf{Persona Consistency} task as the primary metric that best represents the model's ability to maintain its assigned identity. Consequently, we focus on evaluating scores for this specific task to measure the role-playing performance across different post-training methods. For completeness, we also evaluate the models on all other PersonaGym tasks, where the detailed results for these metrics are provided in Appendix~\ref{app:subsec:persona_gym_detail}.

\paragraph{Post-training strategies.}
To isolate the causal effects of different alignment objectives, we compare several representative post-training paradigms starting from the same, instruction-tuned base checkpoint $M_0$. For SFT, the base model is trained on the original GSM8K rationales and answers $y$. We further design \textbf{PerMix-SFT} as a persona-conditioned baseline, which utilizes randomly sampled persona-conditioned inputs $x$+$p$. We conduct with two variants: one is supervised by the original GSM8K rationales $y$, whereas the other is supervised by $y_s$, which consists of target responses rewritten by GPT-5-mini to be persona-consistent. For distillation, we employ sequence-level knowledge distillation (SeqKD) \citep{kim2016sequence} rather than token-level distillation to account for potential token differences between the teacher and student models. In this setup, the student is fine-tuned on teacher-generated sequences. Additionally, we design a persona-conditioned baseline for the distillation strategy, where the teacher model generates sequences while being explicitly conditioned on a sampled persona prompt. We utilize three large-scale teachers (Qwen3-32B, Llama3.1-70B, and Gemma3-27B) to examine whether persona robustness transfers across different model families. The training personas are strictly disjoint from the evaluation personas to prevent leakage. Each experiment was evaluated by averaging the results over five runs per persona, and the task performance was measured using Pass@1 accuracy. Appendix~\ref{app:sec:implementation} describes more implementation details.

\subsubsection{Teacher-Free Post-Training Evaluation}
Table~\ref{tab:controlled_baselines_in_domain_teacher_free} presents the accuracy and persona robustness in the in-domain (math) setting, which is identical to the training dataset, along with the role-playing capability of teacher-free methods. Compared to the base model $M_0$, while standard RLVR shows improvements in both accuracy and persona robustness, \textbf{PerMix-RLVR} significantly outperforms these gains. Notably, \textbf{PerMix-RLVR} substantially elevates the worst-case performance compared to RLVR, which leads to a marked increase in PSS across both the in-distribution GSM8K and out-of-distribution MATH500 benchmarks. 

Intriguingly, our evaluation of PerMix-SFT reveals a trade-off: while using a persona-agnostic target $y$ for an input ($x$, $p$) inherently minimizes persona influence and increases PSS, it is accompanied by a significant degradation in overall accuracy. Moreover, shifting the target to stylized $y_s$ results in a drastic decline in PSS, indicating high sensitivity to persona variations.
In contrast, \textbf{PerMix-RLVR} maintains superior accuracy alongside highest PSS, providing empirical evidence for our theoretical hypothesis that persona-conditioned RLVR more effectively calibrates the model's style competence filter. Furthermore, evaluation on PersonaGym reveals that \textbf{PerMix-RLVR} achieves a higher consistency score even when compared to PerMix-SFT with $y_s$, a baseline specifically optimized for persona fidelity.

\begin{table*}[t!]
\centering
\small
\addtolength{\tabcolsep}{-2pt}    
\resizebox{\linewidth}{!}{
\begin{tabular}{lc|cc|cccc|cccc|c}
\toprule
\multirow{2}{*}{\textbf{Post-training}} & \multirow{2}{*}{\textbf{Teacher}} & \multirow{2}{*}{\textbf{Input}} & \multirow{2}{*}{\textbf{Objective ($\mathcal{J}$)}} & \multicolumn{4}{c|}{\textbf{GSM8K (i.d.)}} & \multicolumn{4}{c|}{\textbf{MATH500 (o.o.d.)}} & \textbf{PersonaGym} \\
 &  &  &  & \texttt{Worst} & \texttt{Best} & \texttt{Acc.\,Mean} & \texttt{PSS} & \texttt{Worst} & \texttt{Best} & \texttt{Acc.\,Mean} & \texttt{PSS} & \texttt{Consistency} \\
 
\midrule

\multirow{2}{*}{SeqKD} & \multirow{2}{*}{Qwen3-32B} & $x$ & $\mathcal{L}_{CE}(y^t)$
& 80.4 & 84.7 & 83.2$_{\,\pm\,0.9}$ & 0.949 & 37.6 & 48.2 & 45.8$_{\,\pm\,2.6}$ & 0.764 & \underline{3.12} \\

& & $x$+$p$ & $\mathcal{L}_{CE}(y^t_p)$
& 81.0 & 85.3 & 84.3$_{\,\pm\,1.2}$ & {0.949} & \underline{40.7} & 47.8 & 45.5$_{\,\pm\,1.6}$ & \textbf{0.839} & 3.06 \\

\midrule

\multirow{2}{*}{SeqKD} & \multirow{2}{*}{Llama3.1-70B} & $x$ & $\mathcal{L}_{CE}(y^t)$
& 83.7 & \underline{87.5} & \textbf{86.6}$_{\,\pm\,0.9}$ & 0.957 & 35.6 & 48.0 & 45.7$_{\,\pm\,2.9}$ & 0.720 & 2.72 \\

& & $x$+$p$ & $\mathcal{L}_{CE}(y^t_p)$
& 82.9 & 87.3 & \underline{86.5}$_{\,\pm\,1.0}$ & {0.950} & 35.2 & 47.7 & 45.2$_{\,\pm\,2.9}$ & 0.728 & 3.09 \\

\midrule

\multirow{2}{*}{SeqKD} & \multirow{2}{*}{Gemma3-27B} & $x$ & $\mathcal{L}_{CE}(y^t)$
& 62.1 & 85.2 & 82.0$_{\,\pm\,5.6}$ & 0.729 & 31.7 & 47.2 & 45.3$_{\,\pm\,3.8}$ & 0.656 & 3.06 \\

& & $x$+$p$ & $\mathcal{L}_{CE}(y^t_p)$
& 81.5 & 85.8 & 84.9$_{\,\pm\,1.2}$ & {0.950} & 21.4 & 46.3 & 42.3$_{\,\pm\,6.2}$ & 0.452 & 2.97 \\

\midrule

RLVR & - & $x$ & $\mathbb{E}[V - \beta D_{KL}]$
& \underline{84.0} & \textbf{87.6} & {86.1}$_{\,\pm\,0.9}$ & \underline{0.959} & 34.0 & \textbf{49.6} & \underline{46.8}$_{\,\pm\,3.8}$ & 0.675 & 3.06 \\

\rowcolor{blue!15}
\textbf{PerMix-RLVR} (ours)\!\!\! & - & $x$+$p$ & $\mathbb{E}[V - \beta D_{KL}]$
& \textbf{84.9} & 87.0 & {86.4}$_{\,\pm\,0.6}$ & \textbf{0.975} & \textbf{41.0} & \underline{48.6} & \textbf{47.1}$_{\,\pm\,1.7}$ & \underline{0.818} & \textbf{3.41} \\

\bottomrule
\end{tabular}
}
\caption{\textbf{Comparison of teacher-based distillation and RLVR on GSM8K and MATH500 (accuracy\,\%, PSS) and PersonaGym (role-playing consistency).} Llama3.1-8B-Instruct is the student, and $y^t$ denotes teacher outputs (or $y^t_p$ when conditioned on a system persona).}
\label{tab:controlled_baselines_in_domain_teacher_based}
\end{table*}


\subsubsection{Teacher-Based Distillation Evaluation}
In Table~\ref{tab:controlled_baselines_in_domain_teacher_based}, we report the comparison between \textbf{PerMix-RLVR} and several teacher-based distillation baselines that depend on various external teacher models. Despite being a teacher-free approach, \textbf{PerMix-RLVR} achieves accuracy and PSS levels comparable to distillation methods that rely on much larger teachers. Notably, \textbf{PerMix-RLVR} continues to demonstrate its strength in the lower tail of performance, achieving the highest worst-case accuracies. Another significant finding is that \textbf{PerMix-RLVR} records a higher PersonaGym consistency score than the distillation baselines trained on $y^t_p$.

In conclusion, these results further validate that \textbf{PerMix-RLVR} is an effective post-training strategy for enhancing persona fidelity while maintaining the most balanced trade-off between task accuracy and persona robustness across all studied paradigms.


\begin{table*}[t]
\centering
\small
\resizebox{0.87\linewidth}{!}{
\begin{tabular}{l|cc|cccc|cccc}
\toprule
\multirow{2.5}{*}{\textbf{Post-training}}
& \multirow{2.5}{*}{\textbf{Input}}
& \multirow{2.5}{*}{\textbf{Objective ($\mathcal{J}$)}}
& \multicolumn{4}{c|}{\textbf{Easy}}
& \multicolumn{4}{c}{\textbf{Medium}} \\
\cmidrule(lr){4-7} \cmidrule(lr){8-11}
&  & 
& \texttt{Worst} & \texttt{Best} & \texttt{Acc.\,Mean} & \texttt{PSS}
& \texttt{Worst} & \texttt{Best} & \texttt{Acc.\,Mean} & \texttt{PSS} \\
\midrule

Base ($M_0$)
& $x$ & -
& 22.5 & 30.4 & 25.4$_{\,\pm\,1.9}$ & 0.741
& 0.6 & \underline{3.6} & 2.1$_{\,\pm\,0.9}$ & 0.167 \\

\midrule
\multicolumn{11}{l}{\textbf{\textit{\textcolor{gray}{Supervised Fine-tuning}}}} \\
\midrule

SFT
& $x$ & $\mathcal{L}_{CE}(y)$
& 19.0 & 25.4 & 22.5$_{\,\pm\,1.6}$ & 0.750
& 0.6 & 3.0 & 1.7$_{\,\pm\,0.7}$ & 0.200 \\

\midrule

\multirow{2}{*}{PerMix-SFT}
& \multirow{2}{*}{$x$+$p$} & $\mathcal{L}_{CE}(y)$
& 21.1 & 28.2 & 25.0$_{\,\pm\,1.9}$ & 0.750
& 0.6 & 3.0 & 1.7$_{\,\pm\,0.6}$ & 0.200 \\

& & $\mathcal{L}_{CE}(y_s)$
& 18.3 & 24.7 & 21.2$_{\,\pm\,1.9}$ & 0.743
& 0.8 & 2.9 & 1.8$_{\,\pm\,0.5}$ & 0.292 \\

\midrule
\multicolumn{11}{l}{\textbf{\textit{\textcolor{gray}{Distillation}}}} \\
\midrule

\multirow{2}{*}{SeqKD-Qwen3-32B}
& $x$ & $\mathcal{L}_{CE}(y^t)$
& 23.9 & 29.6 & 25.9$_{\,\pm\,1.6}$ & \textbf{0.810}
& 0.6 & \underline{3.6} & 2.0$_{\,\pm\,0.9}$ & 0.167 \\

& $x$+$p$ & $\mathcal{L}_{CE}(y^t_p)$
& 21.8 & 29.6 & 25.1$_{\,\pm\,2.1}$ & 0.738
& 0.0 & 2.9 & 2.0$_{\,\pm\,0.7}$ & 0.000 \\

\midrule

\multirow{2}{*}{SeqKD-Llama3.1-70B}
& $x$ & $\mathcal{L}_{CE}(y^t)$
& \underline{24.7} & \underline{31.7} & \underline{27.9}$_{\,\pm\,1.8}$ & 0.778
& 1.2 & 3.0 & \textbf{2.3}$_{\,\pm\,0.5}$ & \underline{0.400} \\

& $x$+$p$ & $\mathcal{L}_{CE}(y^t_p)$
& \textbf{25.5} & \textbf{32.4} & \textbf{28.4}$_{\,\pm\,2.1}$ & 0.787
& \underline{1.3} & 3.0 & \underline{2.2}$_{\,\pm\,0.5}$ & \textbf{0.440} \\
\midrule
\multirow{2}{*}{SeqKD-Gemma3-27B}
& $x$ & $\mathcal{L}_{CE}(y^t)$
& 23.9 & 29.6 & 26.6$_{\,\pm\,2.1}$ & \textbf{0.810}
& \textbf{1.7} & 3.3 & 2.0$_{\,\pm\,0.6}$ & 0.321 \\

& $x$+$p$ & $\mathcal{L}_{CE}(y^t_p)$
& 23.2 & 29.3 & 25.6$_{\,\pm\,1.8}$ & \underline{0.793}
& 0.6 & 3.0 & 1.8$_{\,\pm\,0.7}$ & 0.200 \\

\midrule
\multicolumn{11}{l}{\textbf{\textit{\textcolor{gray}{RL with Verifiable Rewards}}}} \\
\midrule

RLVR
& $x$ & $\mathbb{E}[V - \beta D_{KL}]$
& 21.8 & 30.3 & 26.4$_{\,\pm\,2.2}$ & 0.721
& 1.2 & 3.0 & 2.1$_{\,\pm\,0.7}$ & \underline{0.400} \\

\rowcolor{blue!15}
\textbf{PerMix-RLVR} (ours)
& $x$+$p$ & $\mathbb{E}[V - \beta D_{KL}]$
& 23.2 & 30.3 & 26.4$_{\,\pm\,1.8}$ & 0.767
& \underline{1.3} & \textbf{3.7} & \textbf{2.3}$_{\,\pm\,0.8}$ & 0.355 \\

\bottomrule
\end{tabular}
}
\caption{\textbf{Controlled post-training comparison on LiveCodeBench.} Standard RLVR can \emph{reduce} PSS under domain shift, whereas \textbf{PerMix-RLVR} consistently improves both mean accuracy and PSS across splits, matching or outperforming teacher-free baselines and remaining competitive with teacher-based distillation.}
\label{tab:controlled_baselines_code_easy_medium}
\end{table*}

\subsubsection{Cross-Domain Evaluation}
Finally, we evaluate cross-domain transfer to code generation and reasoning using LiveCodeBench, which includes difficulty strata (Easy/Medium). We use the same evaluation protocol as previous experiments. Table~\ref{tab:controlled_baselines_code_easy_medium} reveals a domain-shift effect in code where standard RLVR appears particularly vulnerable. This supports the finding that verifier-driven optimization alone does not necessarily preserve persona stability under a distributionally different domain. In contrast, \textbf{PerMix-RLVR} consistently improves persona robustness over the base model $M_0$ on both Easy and Medium splits. While distillation methods show strengths in both mean accuracy and persona robustness, \textbf{PerMix-RLVR} remains the most competitive teacher-free approach by significantly enhancing worst-case performance. Considering the high computational budget required for large-scale teacher models in distillation, the fact that \textbf{PerMix-RLVR} achieves comparable performance and robustness proves its effectiveness even under domain shift. We also evaluate our method on more challenging benchmarks, namely AIME2024 for in-domain tasks and the LiveCodeBench Hard split for out-of-domain tasks, whose results are provided in Appendix~\ref{app:subsec:study_hard_tasks}.

\begin{takeawaybox}[title=Takeaway 3: PerMix-RLVR Offers the Most Balanced Trade-off]
\begin{enumerate}[label=(\alph*), leftmargin=1.5em, itemsep=0.2em, topsep=0.2em]
    \item \textbf{Empirical validation of robustness via worst-case elevation:} while standard RLVR improves mean accuracy, PerMix-RLVR is significantly more effective at bolstering worst-case performance across various personas.
    \item \textbf{Consistent effectiveness under domain shift:} PerMix-RLVR demonstrates sustained effectiveness even in shifted domains, maintaining persona stability at a level comparable to distillation methods that rely on much larger teacher models.
    \item \textbf{Persona-mixed RLVR offers the best overall balance:} beyond gains in accuracy and persona stability, PerMix-RLVR delivers the most significant improvements in persona expressivity and fidelity.
\end{enumerate}
\end{takeawaybox}

\section{Conclusion}
Persona prompting at inference time can induce large and unpredictable performance variance, turning prompt selection into a costly \textit{persona lottery}. In this work, we show that RLVR training improves persona robustness on tasks with verifiable goals, but also exposes a trade-off: by favoring styles more compatible with correct reasoning, it can suppress persona-induced styles that are less outcome-aligned, thereby attenuating persona expressivity and fidelity when faithful role-playing is required.

To address this limitation, we propose \textbf{PerMix-RLVR}, a persona-mixed RLVR strategy that learns the same verifiable-reward update under diverse persona-shifted priors. Across math and code domains, \textbf{PerMix-RLVR} consistently improves worst-case performance and robustness across various personas. Meanwhile, it better preserves persona fidelity, yielding a stronger overall balance among task accuracy, persona stability, and persona expressivity. Our work suggests that training-time persona conditioning is a promising direction for building models that remain robust to harmful persona variation without sacrificing controllable persona adoption when it is desired.

\bibliography{colm2026_conference}
\bibliographystyle{colm2026_conference}
\appendix
\clearpage

\section{Theoretical Analysis: Full Derivations}
\label{app:theory}

This appendix provides the full modeling framework and derivations for the results stated in Section~\ref{sec:theory}.

\subsection{Generative Model and Assumptions}
\label{app:model}

We model the generation process with the graphical model as shown in Figure~\ref{fig:graphical_model}.

\begin{figure}[!h]
\centering
\small
\begin{tikzpicture}[
    node distance=1.6cm,
    latent/.style={circle, draw, minimum size=1cm, inner sep=0pt},
    observed/.style={circle, draw, fill=gray!25, minimum size=1cm, inner sep=0pt},
    optional/.style={circle, draw, dashed, fill=gray!10, minimum size=1cm, inner sep=0pt},
    every edge/.style={draw, ->, >=stealth, thick},
]
    \node[observed]  (x) {$x$};
    \node[optional]  (p) [left=of x] {$p$};
    \node[latent]    (s) [above right=0.8cm and 1.2cm of p] {$s$};
    \node[observed]  (z) [right=of s] {$z$};
    \node[observed]  (V) [right=of z] {$V$};

    \draw[->, dashed] (p) -- (s);
    \draw[->] (x) -- (s);
    \draw[->] (s) -- (z);
    \draw[->] (x) to[bend right=20] (z);
    \draw[->] (z) -- (V);
\end{tikzpicture}
\caption{Graphical model of the generation process.
Shaded nodes are observed; the unshaded node $s$ (style) is a latent variable introduced for analysis.
The persona prompt $p$ (dashed) is optional: it is absent in standard RLVR and present in persona-conditioned RLVR.
The reasoning trajectory $z$ is the generated token sequence; the answer $a(z)$ and verifier output $V(z)$ are determined from $z$.
Note that $p$ influences $z$ only through $s$: once the style is fixed, $\pi(z|s,x)$ does not depend on $p$, which is why the style competence $\mu(s,x)$ is persona-independent.}
\label{fig:graphical_model}
\end{figure}
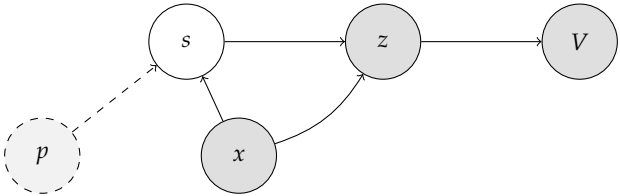

The model decomposes as follows.

\paragraph{Variables.}
\begin{itemize}[itemsep=0.2em]
    \item $x$: problem instance (observed).
    \item $p$: persona prompt (observed when present; absent in standard RLVR).
    \item $s$: style variable (latent)---an interpretive construct capturing tone, vocabulary, formatting, and persona adherence.
    \item $z$: reasoning trajectory (observed)---the generated token sequence from which the answer $a(z)$ is extracted.
    \item $V(z) \in \{0,1\}$: verifier output, determined by $z$ alone.
\end{itemize}

\paragraph{Factorization.}
The joint policy over the output $y = (s, z)$ factorizes as:
\begin{equation}
\label{eq:factorization_app}
    \pi(y \mid c) = \pi(s \mid c) \cdot \pi(z \mid s, x),
\end{equation}
where $c$ is the conditioning context ($c = x$ for standard RLVR, $c = (x,p)$ for persona-conditioned).

\paragraph{Assumptions and their roles.}

\begin{enumerate}[label=\textbf{(A\arabic*)}, leftmargin=2.5em, itemsep=0.3em]
    \item \textbf{Style--reasoning decomposition:} The output decomposes into a style component $s$ and a reasoning component $z$, with the factorization in Eq.~\eqref{eq:factorization_app}. This is an interpretive modeling assumption; in practice, style and reasoning are entangled in LLM outputs. However, the decomposition captures the key intuition that persona prompts primarily affect \emph{how} the model responds (style), not \emph{what} it reasons about (content).

    \item \textbf{Persona affects only style:} The persona $p$ enters only through the style prior $\pi(s|c)$. Once the style $s$ is fixed, the reasoning distribution $\pi(z|s,x)$ does not depend on $p$. In graphical model terms (Figure~\ref{fig:graphical_model}), $s$ d-separates $p$ from $z$ given $x$. This implies that the style competence $\mu(s,x) = \pi_{\mathrm{ref}}(\mathcal{C}(x)|s,x)$ is persona-independent---the central property driving the robustness analysis.

    \item \textbf{Reward depends only on reasoning:} $V(y) = V(z)$. The verifier is style-blind: it evaluates whether the answer is correct, not whether the response matches the persona's tone. This holds by construction for math and code verifiers.
\end{enumerate}

\paragraph{Scope.}
Assumption (A1) is used to derive the style-level filtering mechanism (Eq.~\ref{eq:style_posterior}) and the qualitative analysis of miscalibration vs.\ calibration in Section~\ref{sec:theory}.
Proposition~\ref{prop:acc_robust} does \emph{not} require (A1): it depends only on (A3) and the structure of the KL-regularized optimal policy.

\subsection{Optimal Policy}

RLVR maximizes $J(\pi) = \mathbb{E}_{y \sim \pi(\cdot|c)}[V(z)] - \beta\, D_{\mathrm{KL}}(\pi(\cdot|c) \| \pi_{\mathrm{ref}}(\cdot|c))$, where $c$ is the conditioning context.
$J$ is strictly concave. Setting the functional derivative to zero gives:
\begin{equation}
\label{eq:optimal_app}
    \pi^*(y \mid c) = \frac{1}{Z(c)}\,\pi_{\mathrm{ref}}(y \mid c)\,\exp\!\Big(\frac{V(z)}{\beta}\Big), \qquad Z(c) = \int \pi_{\mathrm{ref}}(y|c)\,\exp(V(z)/\beta)\,dy.
\end{equation}

\subsection{Derivation of the Style Posterior (Eq.~\ref{eq:style_posterior})}
\label{app:style_posterior}

Using the factorization $\pi(y|c) = \pi(s|c) \cdot \pi(z|s,x)$ with $V$ depending only on $z$, the marginal over $z$ gives:
\begin{equation}
    \pi^*(s \mid c) = \frac{1}{Z(c)}\,\pi_{\mathrm{ref}}(s \mid c) \cdot K(s, x), \qquad K(s, x) \equiv \mathbb{E}_{z \sim \pi_{\mathrm{ref}}(\cdot|s,x)}\!\left[\exp\!\Big(\frac{V(z)}{\beta}\Big)\right].
\end{equation}
Since $V(z) \in \{0,1\}$, the style competence function simplifies to:
\begin{equation}
    K(s, x) = e^{1/\beta}\,\mu(s, x) + (1 - \mu(s, x)) = 1 + (e^{1/\beta} - 1)\,\mu(s, x),
\end{equation}
where $\mu(s, x) = \pi_{\mathrm{ref}}(\mathcal{C}(x) \mid s, x)$.
Substituting back yields Eq.~\eqref{eq:style_posterior}.

\paragraph{Properties of the competence term.}
\begin{itemize}
    \item $K(s,x) \geq 1$ for all $s$, with equality when $\mu(s,x) = 0$.
    \item $K(s,x) = e^{1/\beta}$ when $\mu(s,x) = 1$.
    \item As $\beta \to 0$, the ratio between any competent style ($\mu > 0$) and an incompetent one ($\mu = 0$) grows without bound, concentrating the posterior on competent styles.
\end{itemize}

\paragraph{Why $\mu(s,x)$ is persona-independent.}
In the factorization $\pi(y|c) = \pi(s|c) \cdot \pi(z|s,x)$, the persona $p$ enters only through $\pi(s|c)$.
Once $s$ is fixed, $\pi(z|s,x)$ does not depend on which persona induced $s$, so $\mu(s,x) = \pi_{\mathrm{ref}}(\mathcal{C}(x)|s,x)$ is a property of the style--problem pair alone.

\subsection{Connection Between Style Filtering and Trajectory-Level Accuracy}

The partition function decomposes as:
\begin{equation}
    Z(c) = \mathbb{E}_{s \sim \pi_{\mathrm{ref}}(\cdot|c)}\!\big[K(s,x)\big] = 1 + (e^{1/\beta} - 1)\,\underbrace{\mathbb{E}_{s \sim \pi_{\mathrm{ref}}(\cdot|c)}\!\big[\mu(s,x)\big]}_{= \mu_p}.
\end{equation}
The last equality follows from the law of total probability: $\mu_p = \pi_{\mathrm{ref}}(\mathcal{C}(x)|c) = \mathbb{E}_s[\mu(s,x)]$.
This recovers $Z(c) = e^{1/\beta}\mu_p + (1-\mu_p)$, linking the style-level competence to the trajectory-level accuracy $\alpha(\mu_p, \beta)$.

\subsection{Proof of Proposition~\ref{prop:acc_robust}}
\label{app:proof_proposition}

\paragraph{(i) Accuracy.}
Let $\mu_p = \pi_{\mathrm{ref}}(\mathcal{C}(x)|x,p)$. Since $V(z) \in \{0,1\}$: $Z(x,p) = e^{1/\beta}\mu_p + (1-\mu_p)$.
\begin{equation}
    \mathrm{Acc}(\pi^*, x, p) = \frac{e^{1/\beta}\mu_p}{Z(x,p)} = \frac{e^{1/\beta}\mu_p}{e^{1/\beta}\mu_p + (1-\mu_p)} \equiv \alpha(\mu_p, \beta).
\end{equation}

\paragraph{(ii) Improvement.}
\begin{equation}
    \alpha(\mu, \beta) - \mu = \frac{\mu(1-\mu)(e^{1/\beta} - 1)}{e^{1/\beta}\mu + (1-\mu)} > 0 \quad \text{for } \mu \in (0,1),\; \beta > 0.
\end{equation}

\paragraph{(iii) Robustness (PSS bound).}
Let $\mu_{\min} = \min_p \mu_p$, $\mu_{\max} = \max_p \mu_p$.
Define $h(\mu) = \alpha(\mu)/\mu = e^{1/\beta}/(e^{1/\beta}\mu + 1 - \mu)$. Then $h'(\mu) < 0$, so $h(\mu_{\min}) \geq h(\mu_{\max})$:
\begin{equation}
    \PSS(\pi^*) = \frac{\alpha(\mu_{\min})}{\alpha(\mu_{\max})} \geq \frac{\mu_{\min}}{\mu_{\max}} = \PSS(\pi_{\mathrm{ref}}).
\end{equation}

\paragraph{Zero-temperature limit.}
As $\beta \to 0$: $\alpha(\mu, \beta) \to 1$ for any $\mu > 0$, so $\PSS \to 1$.

\paragraph{Concavity interpretation.}
$\alpha'(\mu) = e^{1/\beta}/(e^{1/\beta}\mu + 1 - \mu)^2$ is decreasing in $\mu$: the same absolute improvement in $\mu$ produces a larger accuracy gain at low $\mu_p$ than at high $\mu_p$, compressing the persona-induced performance gap.

\subsection{Scope of the Style Factorization}
The factorization $\pi(y \mid c) = \pi(s \mid c)\,\pi(z \mid s,x)$ is a stylized modeling assumption rather than a literal claim about how LLMs internally separate style from reasoning.
We adopt it as a tractable abstraction for verifiable reasoning tasks, where the reward depends on answer correctness rather than persona fidelity.
Under this abstraction, persona prompts affect performance by shifting the model toward styles that differ in their compatibility with correct reasoning.
In practice, style and reasoning may be more richly entangled, so the analysis should be interpreted as identifying one useful mechanism for persona robustness rather than a complete account of persona-conditioned generation.
Proposition~\ref{prop:acc_robust} does not rely on this factorization; it depends only on $V(z) \in \{0,1\}$ and the KL-regularized optimal policy.
\clearpage
\section{Implementation Details}
\label{app:sec:implementation}

\subsection{Basic Experimental Setup}

We utilized a unified base model and computing platform to ensure a fair comparison across all post-training paradigms (SFT, KD, and RLVR).

\begin{itemize}
    \item \textbf{Base Model:} Llama-3.1-8B-Instruct (\href{https://huggingface.co/meta-llama/Llama-3.1-8B-Instruct}{meta-llama/Llama-3.1-8B-Instruct})
    \item \textbf{Hardware:} 8$\times$ NVIDIA A5000 GPUs (24GB VRAM each)
    \item \textbf{Distributed Strategy:} Distributed Data Parallel (DDP) via \texttt{accelerate} library
    \item \textbf{Optimization Technique:} QLoRA (4-bit quantization)
    \item \textbf{Software Stack}
    \begin{table}[h]
        \centering
        \small
        \label{tab:hyperparams}
        \begin{tabular}{lc}
        \toprule
        \textbf{Name} & \textbf{Version} \\ \midrule
        \texttt{torch} & 2.9.1 \\
        \texttt{transformers} & 4.57.6 \\
        \texttt{accelerate} & 1.12.0 \\
        \texttt{peft} & 0.18.1 \\
        \texttt{CUDA toolkit} & 11.8 \\
        \bottomrule
        \end{tabular}
    \end{table}
\end{itemize}

To perform efficient training within constrained computing resources, we employed QLoRA. The specific configurations are as follows:

\begin{itemize}
    \item \textbf{Quantization:} 4-bit NormalFloat (NF4) with double quantization.
    \item \textbf{LoRA Configuration:} Rank ($r=16$), Alpha ($\alpha=32$), and Dropout (0.05).
    \item \textbf{Target Modules:} The LoRA adapters were applied to all linear layers, including \texttt{q\_proj}, \texttt{k\_proj}, \texttt{v\_proj}, \texttt{o\_proj}, \texttt{gate\_proj}, \texttt{up\_proj}, and \texttt{down\_proj}.
\end{itemize}

\subsection{SFT}

We conducted SFT using two primary configurations: \textbf{Non-Persona SFT} and \textbf{PerMix-SFT}, based on the prompt construction described in Appendix~\ref{app:sec:prompt_construction}. The models were trained to minimize the Cross-Entropy loss:

\begin{equation}
\mathcal{L}_{CE}(\theta) = -\sum_{t=1}^{|y|} \log P_{\theta}(y_t | x, y_{<t})
\end{equation}

For the target sequence $y$, we utilized:
\begin{itemize}
    \item \textbf{Ground-Truth (GT):} The original rationales and answers from the GSM8K training set.
    \item \textbf{Persona-Consistent Style ($y_s$):} Target responses rewritten by GPT-5-mini to align with the sampled persona's linguistic style, ensuring the model learns to maintain persona fidelity while reasoning.
\end{itemize}

\begin{table}[h]
\centering
\small
\begin{tabular}{lc}
\hline
\textbf{Hyperparameter} & \textbf{Value} \\ \hline
Optimizer & PagedAdamW (8bit) \\
Data Type & bfloat16 \\
Learning Rate & $1 \times 10^{-5}$ \\
LR Scheduler & cosine \\
Batch Size & 128 \\
Max Sequence Length & 1024 \small{(Non-Persona)} / 2048 \small{(PerMix-SFT)} \\
Epochs & 3 \\
Warmup Ratio & 0.05 \\
Weight Decaying & 0.0 \\
\hline
\end{tabular}
\caption{Hyperparameters for SFT}
\label{tab:hyperparams_sft}
\end{table}

Detailed configuration parameters used for SFT are provided in Table~\ref{tab:hyperparams_sft}.

\subsection{SeqKD}

\begin{table}[h]
\centering
\small
\begin{tabular}{lc}
\hline
\textbf{Hyperparameter} & \textbf{Value} \\ \hline
Optimizer & PagedAdamW (8bit) \\
Data Type & bfloat16 \\
Learning Rate & $2 \times 10^{-5}$ \\
LR Scheduler & cosine \\
Batch Size & 128 \\
Max Sequence Length & 1024 \\
Epochs & 3 \\
Warmup Ratio & 0.05 \\
Weight Decaying & 0.0 \\
\hline
\end{tabular}
\caption{Hyperparameters for SeqKD}
\label{tab:hyperparams_seqkd}
\end{table}

The loss function for \textbf{SeqKD} \citep{kim2016sequence} can be derived as follows:

\begin{align*}
    \mathcal{L}_{SeqKD} &= -\sum_{t \in \mathcal{T}} q(t|s) \log p(t | s) \\
    q(t|s) &\approx \mathbb{1}[t = \mathop{\arg\max}_{t \in \mathcal{T}}q(t|s)] \\
    \mathcal{L}_{SeqKD} &\approx - \sum_{t \in \mathcal{T}} \mathbb{1}[t = \hat{y}] \log p(t|s) \\
    &= - \log p(t=\hat{y}|s)
\end{align*}

where $q(t|s)$ represents the teacher's sequence distribution over the sample space of all possible sequences, and $\hat{y}$ is the output mode from the teacher model. As shown in the final form of the equation, the objective is equivalent to the cross-entropy loss ($\mathcal{L}_{CE}$).

Similarly to the SFT approach, we designed two variations for SeqKD: Non-Persona and Persona-mixed.
\begin{itemize}
    \item \textbf{Non-Persona Conditioned Teacher Response ($y^t$):} The teacher output generated given only the task instruction.
    \item \textbf{Persona Conditioned Teacher Response ($y^t_p$):} The teacher output extracted when the instruction is conditioned by a randomly sampled persona $p \sim \mathcal{P}_{\text{train}}$.
\end{itemize}

The specific configuration parameters and hyperparameters used for SeqKD are provided in Table~\ref{tab:hyperparams_seqkd}.

\subsection{RLVR}

For the RLVR implementation, we optimize the model using the objective defined in Equation~\ref{eq:rlvr_objective}, adopting the Group Relative Policy Optimization (GRPO) framework.

To evaluate the reasoning trajectories, we utilize the \texttt{math-verify} library provided by HuggingFace to extract final answers from the output sequences. The verifier employs a binary correctness reward, denoted as $r = \mathbb{1}[V(x,y)=a]$, which assigns a reward of 1.0 if the extracted answer matches the ground truth and 0 otherwise.

\begin{table}[h]
\centering
\small
\begin{tabular}{lc}
\hline
\textbf{Hyperparameter} & \textbf{Value} \\ \hline
Optimizer & Fused AdamW \\
Data Type & bfloat16 \\
Learning Rate & $1 \times 10^{-5}$ \\
LR Scheduler & cosine \\
Batch Size & 64 \\
Max Prompt Length & 256 \\
Max Completion Length & 512 \\
Number of Generations & 8 \\
Epochs & 1 \\
Warmup Ratio & 0.05 \\
Weight Decaying & 0.1 \\
Maximum Gradient Norm & 0.1 \\
\hline
\end{tabular}
\caption{Hyperparameters for RLVR}
\label{tab:hyperparams_rlvr}
\end{table}

The specific configuration parameters and hyperparameters used for RLVR training are detailed in Table~\ref{tab:hyperparams_rlvr}.

\subsection{PerMix-RLVR}

Algorithm~\ref{alg:permix-rlvr} outlines the whole training process for \textbf{PerMix-RLVR} implementation. Except for the specialized input formatting, the method follows the identical process as standard RLVR. Additionally, \textbf{PerMix-RLVR} shared the same hyperparameter configurations as the standard RLVR training, as detailed in Table~\ref{tab:hyperparams_rlvr}.

\begin{algorithm}[!h]
\caption{PerMix-RLVR: Persona-Mixed RL with Verifiable Rewards}
\label{alg:permix-rlvr}
\DontPrintSemicolon
\KwIn{Training data $\mathcal{D}=\{(x_i,a_i)\}$, persona pool $\mathcal{P}_{\text{train}}$, system template $s(\cdot)$, verifier $V(\cdot)$, policy $\pi_{\theta}$, reference policy $\pi_{\text{ref}}$, KL coefficient $\beta$, group size $K$.}
\KwOut{Updated policy parameters $\theta$.}

\For{each training step}{
    Sample a minibatch $\mathcal{B}\subset\mathcal{D}$\;
    \ForEach{$(x,a)\in \mathcal{B}$}{
        Sample persona $p \sim \mathrm{Unif}(\mathcal{P}_{\text{train}})$\;
        Construct prompt $u \leftarrow (s(p), x)$ \tcp*[r]{personas in system prompt}
        Sample $K$ responses $\{y^{(k)}\}_{k=1}^K \sim \pi_{\theta}(\cdot \mid u)$\;
        \For{$k=1$ \KwTo $K$}{
            $r^{(k)} \leftarrow \mathbb{1}\!\left[V(x, y^{(k)}) = a\right]$ \tcp*[r]{verifiable reward}
            $d^{(k)} \leftarrow \log \pi_{\theta}(y^{(k)}\mid u) - \log \pi_{\text{ref}}(y^{(k)}\mid u)$ \tcp*[r]{KL term}
        }
        Compute advantages $\{\hat{A}^{(k)}\}$ from $\{r^{(k)}\}$ (e.g., group-normalized)\;
        Update $\theta$ by maximizing:
        \[
        \mathbb{E}_{k}\!\left[\hat{A}^{(k)} \cdot \log \pi_{\theta}(y^{(k)}\mid u)\right]
        \;-\; \beta \cdot \mathbb{E}_{k}\!\left[d^{(k)}\right].
        \]
    }
}
\Return{$\theta$}\;
\end{algorithm}

\section{Additional Results}
\label{app:sec:ablation_study}

\subsection{PSS comparison across diverse set of models and datasets}
We report the full persona-wise results underlying Figure~\ref{fig:pss_graph} (see Tables~\ref{app:tab:math500_persona_by_model} and \ref{app:tab:aime2024_persona_by_model} for the exact evaluation prompts). For each model, we evaluate 16 persona prompts and run each persona-condition 5 times, reporting mean ± std accuracy per persona. We additionally compute Persona Sensitivity Score (PSS) to summarize how stable a model’s performance is across personas (higher is more stable). Across both MATH500 and AIME2024, we observe a clear family-level ordering in persona robustness: Qwen3 consistently achieves the highest PSS, followed by Gemma3, while Llama3 exhibits the lowest PSS, indicating the largest performance disparities across persona prompts. The gap becomes more pronounced on the harder AIME2024 benchmark, where some smaller models show near-floor accuracies for several personas, amplifying variance and lowering PSS. Finally, beyond family effects, scaling generally improves stability within a family—larger checkpoints tend to yield higher PSS and smaller per-persona standard deviations—supporting the view that both model family and model scale jointly govern robustness to persona-induced shifts.

\begin{table*}[t!]
\centering
\small
\addtolength{\tabcolsep}{-3pt}
\resizebox{\textwidth}{!}{%
\begin{tabular}{l|ccccc|ccc|cccc}
\toprule
& \multicolumn{5}{c|}{\textbf{Qwen3}} 
& \multicolumn{3}{c|}{\textbf{Llama3}} 
& \multicolumn{4}{c}{\textbf{Gemma3}} \\
\cmidrule(lr){2-6} \cmidrule(lr){7-9} \cmidrule(lr){10-13}
\textbf{Persona}
& \textbf{0.6B} & \textbf{1.7B} & \textbf{4B} & \textbf{8B} & \textbf{32B}
& \textbf{3.2-1B} & \textbf{3.1-8B} & \textbf{3.3-70B}
& \textbf{1B} & \textbf{4B} & \textbf{12B} & \textbf{27B} \\
\midrule
\texttt{math expert} 
& 47.32$_{\,\pm\,1.66}$ & 69.76$_{\,\pm\,1.04}$ & 79.28$_{\,\pm\,0.66}$ & 80.28$_{\,\pm\,1.27}$ & 82.08$_{\,\pm\,1.38}$
& 14.80$_{\,\pm\,2.63}$ & 43.88$_{\,\pm\,1.00}$ & 71.76$_{\,\pm\,1.15}$
& 26.42$_{\,\pm\,1.67}$ & 70.40$_{\,\pm\,0.93}$ & 80.92$_{\,\pm\,0.40}$ & 86.00$_{\,\pm\,0.84}$ \\
\texttt{software engineer} & 46.28$_{\,\pm\,1.99}$ & 68.68$_{\,\pm\,1.17}$ & 77.48$_{\,\pm\,0.97}$ & 80.12$_{\,\pm\,1.08}$ & 82.80$_{\,\pm\,0.96}$ & 15.72$_{\,\pm\,1.71}$ & 44.8$_{\,\pm\,1.79}$ & 72.44$_{\,\pm\,1.02}$ & 20.92$_{\,\pm\,2.13}$ & 69.24$_{\,\pm\,1.13}$ & 78.92$_{\,\pm\,1.14}$ & 85.16$_{\,\pm\,0.71}$ \\
\texttt{physics professor} & 46.92$_{\,\pm\,1.24}$ & 69.80$_{\,\pm\,1.85}$ & 78.84$_{\,\pm\,1.14}$  & 80.12$_{\,\pm\,0.52}$ & 82.48$_{\,\pm\,0.44}$ & 15.44$_{\,\pm\,0.73}$ & 43.76$_{\,\pm\,0.93}$ & 72.44$_{\,\pm\,1.11}$ & 22.64$_{\,\pm\,1.87}$ & 66.8$_{\,\pm\,0.68}$ & 78.72$_{\,\pm\,1.40}$ & 84.09$_{\,\pm\,0.81}$ \\
\texttt{semiconductor specialist} & 46.64$_{\,\pm\,0.95}$ & 70.00$_{\,\pm\,1.36}$ & 77.92$_{\,\pm\,1.15}$ & 81.60$_{\,\pm\,0.58}$ & 81.84$_{\,\pm\,0.43}$ & 16.16$_{\,\pm\,1.72}$ & 45.00$_{\,\pm\,1.20}$ & 71.28$_{\,\pm\,0.90}$ & 22.08$_{\,\pm\,1.32}$ & 64.72$_{\,\pm\,1.14}$ & 77.56$_{\,\pm\,1.72}$ & 82.48$_{\,\pm\,0.67}$ \\
\midrule
\texttt{kindergarten} 
& 45.48$_{\,\pm\,0.87}$ & 68.08$_{\,\pm\,1.20}$ & 78.16$_{\,\pm\,1.08}$ & 78.24$_{\,\pm\,0.55}$ & 83.84$_{\,\pm\,1.65}$
& 9.16$_{\,\pm\,1.28}$ & 32.16$_{\,\pm\,1.99}$ & 70.76$_{\,\pm\,0.91}$
& 25.00$_{\,\pm\,1.15}$ & 49.56$_{\,\pm\,1.32}$ & 70.20$_{\,\pm\,0.68}$ & 78.88$_{\,\pm\,1.00}$ \\
\texttt{high school} & 46.12$_{\,\pm\,1.38}$ & 68.56$_{\,\pm\,1.37}$ & 79.12$_{\,\pm\,0.92}$ & 80.64$_{\,\pm\,0.98}$ & 82.32$_{\,\pm\,0.73}$ & 12.28$_{\,\pm\,2.39}$ & 43.52$_{\,\pm\,1.21}$ & 72.60$_{\,\pm\,1.06}$ & 21.84$_{\,\pm\,0.91}$ & 67.76$_{\,\pm\,1.15}$ & 78.72$_{\,\pm\,1.14}$ & 84.00$_{\,\pm\,1.32}$ \\
\texttt{undergraduate} & 47.64$_{\,\pm\,1.57}$ & 69.52$_{\,\pm\,1.74}$ & 78.84$_{\,\pm\,1.09}$ & 81.24$_{\,\pm\,1.56}$ & 82.16$_{\,\pm\,0.65}$ & 16.32$_{\,\pm\,1.20}$ & 43.72$_{\,\pm\,2.07}$ & 71.72$_{\,\pm\,0.41}$ & 21.36$_{\,\pm\,1.77}$ & 68.24$_{\,\pm\,1.66}$ & 79.52$_{\,\pm\,1.00}$ & 85.00$_{\,\pm\,1.16}$ \\
\texttt{phd graduate} & 49.24$_{\,\pm\,1.61}$ & 69.44$_{\,\pm\,1.77}$ & 78.96$_{\,\pm\,0.91}$ & 80.84$_{\,\pm\,1.09}$ & 83.08$_{\,\pm\,0.89}$ & 18.32$_{\,\pm\,1.46}$ & 43.84$_{\,\pm\,1.33}$ & 72.60$_{\,\pm\,1.21}$ & 22.48$_{\,\pm\,1.51}$ & 67.32$_{\,\pm\,0.94}$ & 78.80$_{\,\pm\,0.71}$ & 84.16$_{\,\pm\,1.17}$ \\
\midrule
\texttt{clever} & 47.32$_{\,\pm\,0.99}$ & 69.12$_{\,\pm\,1.38}$ & 78.2$_{\,\pm\,1.42}$ & 79.88$_{\,\pm\,2.13}$ & 81.60$_{\,\pm\,1.99}$ & 16.16$_{\,\pm\,0.79}$ & 42.6$_{\,\pm\,1.39}$ & 72.20$_{\,\pm\,1.09}$ & 16.76$_{\,\pm\,1.90}$ & 64.04$_{\,\pm\,1.56}$ & 77.92$_{\,\pm\,0.52}$ & 83.56$_{\,\pm\,1.24}$ \\
\texttt{dumb} & 43.52$_{\,\pm\,1.73}$ & 68.32$_{\,\pm\,0.87}$ & 79.24$_{\,\pm\,1.81}$ & 79.12$_{\,\pm\,0.74}$ & 81.88$_{\,\pm\,1.00}$ & 10.64$_{\,\pm\,1.11}$ & 35.92$_{\,\pm\,1.31}$ & 67.48$_{\,\pm\,1.25}$ & 20.92$_{\,\pm\,1.60}$ & 68.36$_{\,\pm\,0.75}$ & 75.12$_{\,\pm\,1.29}$ & 84.08$_{\,\pm\,0.84}$ \\
\texttt{questioning} & 48.04$_{\,\pm\,1.87}$ & 69.12$_{\,\pm\,0.94}$ & 78.36$_{\,\pm\,0.74}$ & 81.20$_{\,\pm\,1.03}$ & 82.32$_{\,\pm\,0.94}$ & 16.28$_{\,\pm\,0.73}$ & 43.52$_{\,\pm\,2.27}$ & 71.32$_{\,\pm\,1.71}$ & 23.16$_{\,\pm\,1.56}$ & 65.88$_{\,\pm\,0.41}$ & 79.80$_{\,\pm\,1.21}$ & 83.96$_{\,\pm\,0.79}$ \\
\texttt{easygoing} & 46.80$_{\,\pm\,1.30}$ & 69.20$_{\,\pm\,0.73}$ & 79.56$_{\,\pm\,1.19}$ & 79.60$_{\,\pm\,0.84}$ & 82.28$_{\,\pm\,0.81}$ & 13.04$_{\,\pm\,1.39}$ & 44.12$_{\,\pm\,1.29}$ & 71.80$_{\,\pm\,0.47}$ & 22.96$_{\,\pm\,2.30}$ & 64.76$_{\,\pm\,1.57}$ & 76.72$_{\,\pm\,0.79}$ & 82.44$_{\,\pm\,1.06}$ \\
\midrule
\texttt{carpenter} 
& 46.20$_{\,\pm\,1.61}$ & 69.08$_{\,\pm\,2.27}$ & 78.52$_{\,\pm\,0.72}$ & 80.66$_{\,\pm\,1.05}$ & 81.64$_{\,\pm\,0.99}$
& 13.40$_{\,\pm\,2.09}$ & 42.96$_{\,\pm\,1.85}$ & 72.60$_{\,\pm\,0.68}$
& 31.64$_{\,\pm\,0.55}$ & 68.08$_{\,\pm\,0.92}$ & 78.80$_{\,\pm\,1.08}$ & 84.16$_{\,\pm\,0.79}$ \\
\texttt{teacher} & 46.40$_{\,\pm\,1.84}$ & 69.88$_{\,\pm\,1.10}$ & 78.76$_{\,\pm\,1.16}$ & 80.16$_{\,\pm\,0.84}$ & 82.64$_{\,\pm\,0.26}$ & 17.08$_{\,\pm\,1.36}$ & 44.36$_{\,\pm\,0.86}$ & 71.64$_{\,\pm\,1.68}$ & 15.72$_{\,\pm\,1.89}$ & 62.96$_{\,\pm\,0.89}$ & 76.80$_{\,\pm\,1.33}$ & 83.88$_{\,\pm\,0.77}$ \\
\texttt{lawyer} & 46.72$_{\,\pm\,1.38}$ & 70.44$_{\,\pm\,1.51}$ & 79.00$_{\,\pm\,0.66}$ & 80.48$_{\,\pm\,0.95}$ & 82.88$_{\,\pm\,0.92}$ & 15.76$_{\,\pm\,0.93}$ & 44.12$_{\,\pm\,2.28}$ & 71.76$_{\,\pm\,1.08}$ & 21.40$_{\,\pm\,0.58}$ & 65.88$_{\,\pm\,1.31}$ & 78.88$_{\,\pm\,0.58}$ & 84.56$_{\,\pm\,0.83}$ \\
\texttt{sports player} & 46.36$_{\,\pm\,0.67}$ & 69.44$_{\,\pm\,1.35}$ & 78.20$_{\,\pm\,0.79}$ & 80.04$_{\,\pm\,1.19}$ & 82.12$_{\,\pm\,1.04}$ & 15.60$_{\,\pm\,0.93}$ & 42.56$_{\,\pm\,0.71}$ & 72.12$_{\,\pm\,1.00}$ & 21.40$_{\,\pm\,0.58}$ & 63.96$_{\,\pm\,0.85}$ & 77.32$_{\,\pm\,0.52}$ & 82.92$_{\,\pm\,1.07}$ \\
\midrule
\texttt{PSS} & 0.8838 & 0.9665 & 0.9739 & 0.9588 & 0.9733 & 0.5000 & 0.7147 & 0.9295 & 0.4968 & 0.7040 & 0.8675 & 0.9172 \\
\bottomrule
\end{tabular}%
}
\caption{Persona-wise performance on \textbf{MATH500}. Columns are grouped by model family (Qwen/Llama/Gemma). Values are mean$_{\,\pm\,\mathrm{std}}$.}
\label{app:tab:math500_persona_by_model}
\end{table*}

\begin{table*}[t!]
\centering
\small
\addtolength{\tabcolsep}{-3pt}
\resizebox{\textwidth}{!}{%
\begin{tabular}{l|ccccc|ccc|cccc}
\toprule
& \multicolumn{5}{c|}{\textbf{Qwen3}} 
& \multicolumn{3}{c|}{\textbf{Llama3}} 
& \multicolumn{4}{c}{\textbf{Gemma3}} \\
\cmidrule(lr){2-6} \cmidrule(lr){7-9} \cmidrule(lr){10-13}
\textbf{Persona}
& \textbf{0.6B} & \textbf{1.7B} & \textbf{4B} & \textbf{8B} & \textbf{32B}
& \textbf{3.2-1B} & \textbf{3.1-8B} & \textbf{3.3-70B}
& \textbf{1B} & \textbf{4B} & \textbf{12B} & \textbf{27B} \\
\midrule
\texttt{math expert} 
& 1.98$_{\,\pm\,1.81}$ & 12.62$_{\,\pm\,4.34}$ & 21.33$_{\,\pm\,3.80}$ & 28.00$_{\,\pm\,6.91}$ & 23.99$_{\,\pm\,8.94}$
& 0.66$_{\,\pm\,1.49}$ & 6.66$_{\,\pm\,2.79}$ & 24.00$_{\,\pm\,4.42}$
& 2.00$_{\,\pm\,2.98}$ & 7.33$_{\,\pm\,4.94}$ & 25.33$_{\,\pm\,5.05}$ & 27.33$_{\,\pm\,3.89}$ \\
\texttt{software engineer} & 1.33$_{\,\pm\,1.83}$ & 11.33$_{\,\pm\,2.98}$ & 19.33$_{\,\pm\,4.94}$ & 24.67$_{\,\pm\,3.80}$ & 24.00$_{\,\pm\,2.79}$ & 2.00$_{\,\pm\,2.98}$ & 3.33$_{\,\pm\,2.36}$ & 25.33$_{\,\pm\,6.86}$ & 0.67$_{\,\pm\,1.49}$ & 6.00$_{\,\pm\,3.65}$ & 20.00$_{\,\pm\,2.36}$ & 24.00$_{\,\pm\,2.50}$\\
\texttt{physics professor} & 1.33$_{\,\pm\,1.83}$ & 10.67$_{\,\pm\,2.79}$ & 18.67$_{\,\pm\,3.80}$ & 28.00$_{\,\pm\,4.47}$ & 24.67$_{\,\pm\,3.80}$ & 0.67$_{\,\pm\,1.49}$ & 4.00$_{\,\pm\,2.79}$ & 24.00$_{\,\pm\,3.27}$ & 0.67$_{\,\pm\,1.49}$ & 9.33$_{\,\pm\,6.41}$ & 22.67$_{\,\pm\,2.79}$ & 24.00$_{\,\pm\,2.50}$ \\
\texttt{semiconductor specialist} & 2.00$_{\,\pm\,2.98}$ & 10.67$_{\,\pm\,1.49}$ & 21.33$_{\,\pm\,1.83}$ & 24.00$_{\,\pm\,6.41}$ & 27.33$_{\,\pm\,2.79}$ & 0.00$_{\,\pm\,0.00}$ & 4.67$_{\,\pm\,1.83}$ & 24.67$_{\,\pm\,3.40}$ & 0.00$_{\,\pm\,0.00}$ & 4.67$_{\,\pm\,3.80}$ & 22.00$_{\,\pm\,5.06}$ & 18.00$_{\,\pm\,4.99}$ \\
\midrule
\texttt{kindergarten} 
& 1.98$_{\,\pm\,1.81}$ & 11.30$_{\,\pm\,2.99}$ & 18.66$_{\,\pm\,5.58}$ & 21.33$_{\,\pm\,2.98}$ & 27.30$_{\,\pm\,2.81}$
& 1.33$_{\,\pm\,1.82}$ & 1.99$_{\,\pm\,2.98}$ & 21.33$_{\,\pm\,1.63}$
& 0.67$_{\,\pm\,1.49}$ & 8.00$_{\,\pm\,5.06}$ & 16.66$_{\,\pm\,3.34}$ & 16.67$_{\,\pm\,3.65}$ \\
\texttt{high school} & 2.00$_{\,\pm\,1.83}$ & 6.67$_{\,\pm\,4.71}$ & 18.67$_{\,\pm\,3.80}$ & 26.67$_{\,\pm\,4.08}$ & 28.00$_{\,\pm\,6.06}$ & 0.00$_{\,\pm\,0.00}$ & 3.33$_{\,\pm\,3.33}$ & 24.67$_{\,\pm\,1.64}$ & 1.33$_{\,\pm\,1.83}$ & 11.33$_{\,\pm\,1.83}$ & 18.67$_{\,\pm\,2.98}$ & 26.67$_{\,\pm\,4.71}$ \\
\texttt{undergraduate} & 1.33$_{\,\pm\,1.83}$ & 10.67$_{\,\pm\,2.79}$ & 24.00$_{\,\pm\,4.35}$ & 26.67$_{\,\pm\,3.33}$ & 24.00$_{\,\pm\,8.30}$ & 2.00$_{\,\pm\,2.98}$ & 4.67$_{\,\pm\,2.98}$ & 20.00$_{\,\pm\,8.43}$ & 0.67$_{\,\pm\,1.49}$ & 9.33$_{\,\pm\,2.79}$ & 19.33$_{\,\pm\,2.79}$ & 26.00$_{\,\pm\,3.89}$ \\
\texttt{phd graduate} & 4.67$_{\,\pm\,2.98}$ & 8.00$_{\,\pm\,6.50}$ & 20.67$_{\,\pm\,2.79}$ & 20.67$_{\,\pm\,5.48}$ & 28.67$_{\,\pm\,1.83}$ & 1.33$_{\,\pm\,1.83}$ & 5.33$_{\,\pm\,3.80}$ & 24.67$_{\,\pm\,4.00}$ & 0.67$_{\,\pm\,1.49}$ & 9.33$_{\,\pm\,4.35}$ & 26.00$_{\,\pm\,7.23}$ & 28.67$_{\,\pm\,5.42}$ \\
\midrule
\texttt{clever} & 2.00$_{\,\pm\,1.83}$ & 11.33$_{\,\pm\,2.98}$ & 21.33$_{\,\pm\,9.01}$ & 26.67$_{\,\pm\,4.71}$ & 26.00$_{\,\pm\,5.48}$ & 0.00$_{\,\pm\,0.00}$ & 2.67$_{\,\pm\,1.49}$ & 28.00$_{\,\pm\,2.66}$ & 0.00$_{\,\pm\,0.00}$ & 8.00$_{\,\pm\,3.80}$ & 20.67$_{\,\pm\,4.35}$ & 23.33$_{\,\pm\,7.60}$ \\
\texttt{dumb} & 0.67$_{\,\pm\,1.49}$ & 10.67$_{\,\pm\,4.35}$ & 18.00$_{\,\pm\,1.83}$ & 21.33$_{\,\pm\,4.47}$ & 19.33$_{\,\pm\,5.96}$ & 0.67$_{\,\pm\,1.49}$ & 3.33$_{\,\pm\,0.00}$ & 22.00$_{\,\pm\,8.05}$ & 0.00$_{\,\pm\,0.00}$ & 6.00$_{\,\pm\,2.79}$ & 16.00$_{\,\pm\,4.35}$ & 22.00$_{\,\pm\,7.18}$ \\
\texttt{questioning} & 1.33$_{\,\pm\,1.83}$ & 12.00$_{\,\pm\,2.98}$ & 23.33$_{\,\pm\,5.77}$ & 22.00$_{\,\pm\,5.58}$ & 28.67$_{\,\pm\,1.83}$ & 2.67$_{\,\pm\,1.49}$ & 4.67$_{\,\pm\,2.98}$ & 25.67$_{\,\pm\,4.90}$ & 0.00$_{\,\pm\,0.00}$ & 9.33$_{\,\pm\,2.79}$ & 24.67$_{\,\pm\,4.47}$ & 22.67$_{\,\pm\,3.89}$ \\
\texttt{easygoing} & 3.33$_{\,\pm\,2.36}$ & 10.00$_{\,\pm\,4.08}$ & 20.00$_{\,\pm\,7.07}$ & 24.00$_{\,\pm\,6.41}$ & 28.67$_{\,\pm\,9.89}$ & 0.00$_{\,\pm\,0.00}$ & 6.00$_{\,\pm\,4.35}$ & 24.00$_{\,\pm\,2.50}$ & 1.33$_{\,\pm\,1.83}$ & 8.67$_{\,\pm\,1.83}$ & 20.00$_{\,\pm\,0.00}$ & 20.67$_{\,\pm\,4.42}$ \\
\midrule
\texttt{carpenter} 
& 3.96$_{\,\pm\,2.76}$ & 8.64$_{\,\pm\,1.86}$ & 19.33$_{\,\pm\,4.94}$ & 27.33$_{\,\pm\,3.65}$ & 24.66$_{\,\pm\,2.98}$
& 0.66$_{\,\pm\,1.49}$ & 3.33$_{\,\pm\,3.33}$ & 23.33$_{\,\pm\,2.98}$
& 1.33$_{\,\pm\,1.82}$ & 12.00$_{\,\pm\,3.80}$ & 22.00$_{\,\pm\,6.91}$ & 20.00$_{\,\pm\,7.60}$ \\
\texttt{teacher} & 1.33$_{\,\pm\,1.83}$ & 10.67$_{\,\pm\,4.35}$ & 20.00$_{\,\pm\,3.33}$ & 28.00$_{\,\pm\,5.06}$ & 25.33$_{\,\pm\,6.50}$ & 0.00$_{\,\pm\,0.00}$ & 3.33$_{\,\pm\,0.00}$ & 24.67$_{\,\pm\,3.40}$ & 0.67$_{\,\pm\,1.49}$ & 7.33$_{\,\pm\,2.79}$ & 18.67$_{\,\pm\,6.50}$ & 22.00$_{\,\pm\,3.40}$ \\
\texttt{lawyer} & 1.33$_{\,\pm\,1.83}$ & 11.33$_{\,\pm\,3.80}$ & 18.00$_{\,\pm\,4.47}$ & 26.67$_{\,\pm\,8.16}$ & 22.67$_{\,\pm\,4.35}$ & 0.00$_{\,\pm\,0.00}$ & 6.67$_{\,\pm\,2.36}$ & 22.67$_{\,\pm\,4.90}$ & 0.67$_{\,\pm\,1.49}$ & 10.00$_{\,\pm\,2.36}$ & 22.00$_{\,\pm\,3.80}$ & 24.00$_{\,\pm\,6.11}$ \\
\texttt{sports player} & 1.33$_{\,\pm\,1.83}$ & 12.67$_{\,\pm\,3.65}$ & 20.67$_{\,\pm\,6.83}$ & 27.33$_{\,\pm\,4.35}$ & 24.67$_{\,\pm\,8.37}$ & 0.67$_{\,\pm\,1.49}$ & 2.67$_{\,\pm\,2.79}$ & 22.00$_{\,\pm\,3.40}$ & 0.00$_{\,\pm\,0.00}$ & 8.00$_{\,\pm\,3.80}$ & 21.33$_{\,\pm\,5.06}$ & 19.34$_{\,\pm\,6.46}$ \\
\midrule
\texttt{PSS}
& 0.1435 & 0.5264 & 0.7500 & 0.7382 & 0.6742
& 0.0000 & 0.2984 & 0.7143
& 0.0000 & 0.3892 & 0.6154 & 0.5814 \\
\bottomrule
\end{tabular}%
}
\caption{Persona-wise performance on \textbf{AIME2024}. Rows are personas and columns are models. Values are mean$_{\,\pm\,\mathrm{std}}$.}
\label{app:tab:aime2024_persona_by_model}
\end{table*}

\subsection{PersonaGym}
\label{app:subsec:persona_gym_detail}
\paragraph{Experiments setup.}
 We evaluated all models using our evaluation persona pool (Table~\ref{app:tab:persona_prompts_eval}). Following \citet{samuel-etal-2025-personagym}, we employed two LLM judges—gpt-5-mini and DeepSeek-v3.2—to assess whether each model adhered to the assigned persona. Note that \citet{samuel-etal-2025-personagym} used gpt-4o and Llama-3-70B as judges; this choice can affect absolute scores. To standardize evaluation, gpt-5-mini generated a set of persona-relevant questions, and we used the same question set for each persona across all evaluated models. 
 
 In PersonaGym, persona-following is evaluated along five complementary axes: Expected Action (EA) checks whether the agent selects actions that are logically expected for the given persona in the specific environment; Action Justification (AJ) assesses whether the agent’s explanations and rationales are coherent and persona-appropriate; Linguistic Habits (LH) measures adherence to persona-typical communication patterns such as tone, jargon, and style; Toxicity Control (TC) evaluates whether the agent remains safe and non-toxic under persona-relevant provocative prompts; and crucially, Persona Consistency (PC) directly measures fidelity to the assigned persona attributes when queried, i.e., whether the agent consistently maintains the intended identity rather than reverting to generic assistant behavior. In our paper, this PC dimension most tightly matches what we mean by “role-playing”: it operationalizes the core requirement that a model stays in character (identity/traits/constraints) across interactions, making it the most aligned PersonaGym component for quantifying persona expressivity and faithfulness under persona prompts.
 
\paragraph{Additional results.}
We show additional results on PersonaGym in Table~\ref{app:tab:roleplay_detailed} aside from Persona Consistency. When aggregating all results, including Persona Consistency, we find that \textbf{PerMix-RLVR} achieves the best performance on PersonaGym.

\begin{table*}[!t]
\centering
\small
\resizebox{0.85\linewidth}{!}{
\begin{tabular}{l|cc|cccccc}
\toprule
\textbf{Post-training}
& \textbf{Input}
& \textbf{Objective ($\mathcal{J}$)}
& \texttt{EA}
& \texttt{AJ}
& \texttt{LH}
& \texttt{PC}
& \texttt{TC}
& \texttt{Mean Score} \\
\midrule

Base ($M_0$)
& $x$ & -
& 2.69 & 2.59 & 2.44 & 3.19 & 2.91 & 2.76 \\

\midrule
\multicolumn{9}{l}{\textbf{Supervised Fine-tuning}} \\
\midrule

SFT
& $x$ & $\mathcal{L}_{CE}(y)$
& 2.38 & 2.03 & 2.31 & 2.53 & 2.72 & 2.39 \\

\midrule

\multirow{2}{*}{PerMix-SFT} 
& \multirow{2}{*}{$x$+$p$} & $\mathcal{L}_{CE}(y)$
& 2.16 & 2.16 & 2.41 & 2.88 & 2.91 & 2.50 \\

&  & $\mathcal{L}_{CE}(y_s)$
& 2.56 & 2.59 & 2.25 & 3.31 & 3.13 & 2.77 \\

\midrule
\multicolumn{9}{l}{\textbf{Distillation}} \\
\midrule

\multirow{2}{*}{SeqKD-Qwen} 
& $x$ & $\mathcal{L}_{CE}(y^t)$
& 3.09 & 2.38 & 2.19 & 3.12 & 3.00 & 2.76 \\

& $x{+}p$ & $\mathcal{L}_{CE}(y^t_p)$
& 2.78 & 2.53 & 1.97 & 3.06 & 3.03 & 2.69 \\

\midrule

\multirow{2}{*}{SeqKD-Llama} 
& $x$ & $\mathcal{L}_{CE}(y^t)$
& 2.59 & 2.47 & 2.31 & 2.72 & 3.03 & 2.66 \\

& $x{+}p$ & $\mathcal{L}_{CE}(y^t_p)$
& 2.41 & 2.69 & 2.44 & 3.09 & 3.09 & 2.77 \\

\midrule

\multirow{2}{*}{SeqKD-Gemma} 
& $x$ & $\mathcal{L}_{CE}(y^t)$
& 2.66 & 2.78 & 2.47 & 3.06 & 3.03 & 2.82 \\

& $x{+}p$ & $\mathcal{L}_{CE}(y^t_p)$
& 2.72 & 2.78 & 2.50 & 2.97 & 3.25 & 2.86 \\

\midrule
\multicolumn{9}{l}{\textbf{RL with Verifiable Rewards}} \\
\midrule

RLVR
& $x$ & $\mathbb{E}[V - \beta D_{KL}]$
& 2.84 & 2.66 & 2.13 & 3.06 & 3.25 & 2.79 \\

\rowcolor{blue!15}
PerMix-RLVR (ours)
& $x{+}p$ & $\mathbb{E}[V - \beta D_{KL}]$
& 2.97 & 2.53 & 2.44 & 3.41 & 3.03 & \textbf{2.87} \\

\bottomrule
\end{tabular}
}
\caption{PersonaGym role-playing evaluation across post-training methods. We report five PersonaGym dimensions—Expected Action (EA), Action Justification (AJ), Linguistic Habits (LH), Persona Consistency (PC; our primary proxy for role-playing fidelity), and Toxicity Control (TC)—along with the overall mean score. \textbf{PerMix-RLVR} achieves the best overall balance, delivering the highest mean score and the strongest PC, indicating more faithful in-character behavior.}
\label{app:tab:roleplay_detailed}
\end{table*}

\subsection{Studies on Hard Tasks}
\label{app:subsec:study_hard_tasks}
In extremely difficult settings such as AIME24 and LiveCodeBench (Hard), we observe that the Persona Stability Score (PSS) often remains at (or near) zero across many post-training variants, making it difficult for \textbf{PerMix-RLVR} to exhibit its usual advantage in improving stability under persona shifts as shown in Table~\ref{app:tab:controlled_baselines_aime_only} and ~\ref{app:tab:controlled_baselines_code_hard}. This behavior is expected: when the overall success rate is very low, most personas yield uniformly incorrect outcomes, so the worst, best, and mean accuracies collapse to similar values and leave little dynamic range for PSS to discriminate robustness. Moreover, outcome-only verifiable rewards provide a sparse learning signal in these regimes, and training is dominated by the challenge of achieving any correct solutions at all rather than refining a persona-invariant policy that generalizes across stylistic perturbations. Consequently, while \textbf{PerMix-RLVR} can still improve average accuracy or maintain competitiveness, PSS gains may be masked by floor effects in these hardest benchmarks.

\begin{table*}[t]
\centering
\small
\resizebox{0.75\linewidth}{!}{
\begin{tabular}{l|cc|cccc}
\toprule
\multirow{2}{*}{\textbf{Post-training}}
& \multirow{2}{*}{\textbf{Input}}
& \multirow{2}{*}{\textbf{Objective ($\mathcal{J}$)}}
& \multicolumn{4}{c}{\textbf{AIME24}} \\
\cmidrule(lr){4-7}
& & 
& \texttt{Worst} & \texttt{Best} & \texttt{Acc.\,Mean} & \texttt{PSS} \\
\midrule

Base ($M_0$)
& $x$ & -
& 1.33 & 4.67 & 2.67 {\tiny $\pm$ 1.11} & 0.286 \\

\midrule
\multicolumn{7}{l}{\textbf{Supervised Fine-tuning}} \\
\midrule

SFT
& $x$ & $\mathcal{L}_{CE}(y)$
& 0.00 & 4.00 & 2.00 {\tiny $\pm$ 1.26} & 0.000 \\

\midrule

\multirow{2}{*}{PerMix-SFT}
& \multirow{2}{*}{$x$+$p$} & $\mathcal{L}_{CE}(y)$
& 0.00 & 3.33 & 1.42 {\tiny $\pm$ 0.99} & 0.000 \\

& & $\mathcal{L}_{CE}(y_s)$
& 0.00 & 8.00 & 3.71 {\tiny $\pm$ 2.31} & 0.000 \\

\midrule
\multicolumn{7}{l}{\textbf{Distillation}} \\
\midrule

\multirow{2}{*}{SeqKD-Qwen}
& $x$ & $\mathcal{L}_{CE}(y^t)$
& 2.00 & 6.67 & 4.29 {\tiny $\pm$ 1.60} & \underline{0.300} \\

& $x$+$p$ & $\mathcal{L}_{CE}(y^t_p)$
& 2.00 & 9.33 & \underline{4.75} {\tiny $\pm$ 1.99} & 0.214 \\

\midrule

\multirow{2}{*}{SeqKD-Llama}
& $x$ & $\mathcal{L}_{CE}(y^t)$
& 0.67 & 6.00 & 3.21 {\tiny $\pm$ 1.62} & 0.111 \\

& $x$+$p$ & $\mathcal{L}_{CE}(y^t_p)$
& 0.67 & 6.00 & 2.92 {\tiny $\pm$ 1.91} & 0.111 \\

\midrule

\multirow{2}{*}{SeqKD-Gemma}
& $x$ & $\mathcal{L}_{CE}(y^t)$
& 2.00 & 6.00 & 3.87 {\tiny $\pm$ 1.40} & \textbf{0.333} \\

& $x$+$p$ & $\mathcal{L}_{CE}(y^t_p)$
& 1.33 & 6.00 & 3.46 {\tiny $\pm$ 1.32} & 0.222 \\

\midrule
\multicolumn{7}{l}{\textbf{RL with Verifiable Rewards}} \\
\midrule

RLVR
& $x$ & $\mathbb{E}[V - \beta D_{KL}]$
& 2.00 & 8.00 & \textbf{5.79} {\tiny $\pm$ 1.75} & 0.250 \\

\rowcolor{blue!15}
\textbf{Persona-mixed RL} (ours)
& $x$+$p$ & $\mathbb{E}[V - \beta D_{KL}]$
& 1.33 & 6.67 & 3.83 {\tiny $\pm$ 1.59} & 0.200 \\

\bottomrule
\end{tabular}
}
\caption{Controlled post-training comparison on \textbf{AIME24}. We report persona-specific worst/best performance, mean accuracy, and Persona Stability Score (PSS). Bold indicates the best overall value, and underline indicates the second best.}
\label{app:tab:controlled_baselines_aime_only}
\end{table*}

\begin{table}[t]
\centering
\small
\resizebox{0.75\linewidth}{!}{
\begin{tabular}{l|cc|cccc}
\toprule
\multirow{2}{*}{\textbf{Post-training}}
& \multirow{2}{*}{\textbf{Input}}
& \multirow{2}{*}{\textbf{Objective ($\mathcal{J}$)}}
& \multicolumn{4}{c}{\textbf{LiveCodeBench (Hard)}} \\
\cmidrule(lr){4-7}
& & 
& \texttt{Worst} & \texttt{Best} & \texttt{Acc.\,Mean} & \texttt{PSS} \\
\midrule

Base ($M_0$)
& $x$ & -
& 0.0000 & 0.0133 & 0.0085 & 0.0000 \\

\midrule
\multicolumn{7}{l}{\textbf{Supervised Fine-tuning}} \\
\midrule

SFT
& $x$ & $\mathcal{L}_{CE}(y)$
& 0.0000 & 0.0222 & 0.0035 & 0.0000 \\

\midrule

\multirow{2}{*}{PerMix-SFT}
& \multirow{2}{*}{$x$+$p$} & $\mathcal{L}_{CE}(y)$
& 0.0000 & 0.0111 & 0.0042 & 0.0000 \\

&  & $\mathcal{L}_{CE}(y_s)$
& 0.0000 & 0.0222 & 0.0056 & 0.0000 \\

\midrule
\multicolumn{7}{l}{\textbf{Distillation}} \\
\midrule

\multirow{2}{*}{SeqKD-Qwen}
& $x$ & $\mathcal{L}_{CE}(y^t)$
& 0.0000 & 0.0222 & 0.0097 & 0.0000 \\

& $x$+$p$ & $\mathcal{L}_{CE}(y^t_p)$
& 0.0000 & 0.0222 & 0.0088 & 0.0000 \\

\midrule

\multirow{2}{*}{SeqKD-Llama}
& $x$ & $\mathcal{L}_{CE}(y^t)$
& 0.0000 & 0.0222 & 0.0122 & 0.0000 \\

& $x$+$p$ & $\mathcal{L}_{CE}(y^t_p)$
& 0.0000 & 0.0222 & 0.0129 & 0.0000 \\

\midrule

\multirow{2}{*}{SeqKD-Gemma}
& $x$ & $\mathcal{L}_{CE}(y^t)$
& 0.0000 & 0.0178 & 0.0100 & 0.0000 \\

& $x$+$p$ & $\mathcal{L}_{CE}(y^t_p)$
& 0.0000 & 0.0222 & 0.0111 & 0.0000 \\

\midrule
\multicolumn{7}{l}{\textbf{RL with Verifiable Rewards}} \\
\midrule

RLVR
& $x$ & $\mathbb{E}[V - \beta D_{KL}]$
& 0.0000 & 0.0222 & 0.0108 & 0.0000 \\

\rowcolor{blue!15}
\textbf{PerMix-RLVR} (ours)
& $x$+$p$ & $\mathbb{E}[V - \beta D_{KL}]$
& 0.0000 & 0.0222 & 0.0096 & 0.0000 \\

\bottomrule
\end{tabular}
}
\caption{Controlled post-training comparison on \textbf{LiveCodeBench (Hard)}. We report persona-specific worst/best performance, mean accuracy, and Persona Stability Score (PSS).}
\label{app:tab:controlled_baselines_code_hard}
\end{table}

\clearpage
\section{Prompt Construction}
\label{app:sec:prompt_construction}

\paragraph{Persona pools.}
We use two disjoint persona pools: a training pool $\mathcal{P}_{\text{train}}$ and an evaluation pool $\mathcal{P}_{\text{eval}}$.
During \textbf{PerMix-RLVR} (and persona-mixed baselines), we sample a persona $p \sim \mathrm{Unif}(\mathcal{P}_{\text{train}})$ and prepend its system prompt to the user query.
For evaluation, we sample $p \sim \mathrm{Unif}(\mathcal{P}_{\text{eval}})$ to avoid train--test leakage of persona prompts.
Tables~\ref{app:tab:persona_prompts_train_part1}, \ref{app:tab:persona_prompts_train_part2}, and \ref{app:tab:persona_prompts_eval} list the full system prompts.

\paragraph{System prompt format.}
Given a persona key $p$, we instantiate the system message as:
\[
\texttt{SYSTEM} = \mathrm{Prompt}(p),
\]
where $\mathrm{Prompt}(\cdot)$ is the persona-specific instruction in the tables below.

\begin{table*}[!h]
\centering
\small
\resizebox{1.0\linewidth}{!}{
\begin{tabular}{p{0.18\linewidth} p{0.78\linewidth}}
\toprule
\textbf{Persona key} & \textbf{System prompt} \\
\midrule
\multicolumn{2}{l}{\textbf{Tech Specialist}}\\
\midrule
statistician & You are a statistician with expertise in data analysis and probability. Approach problems by analyzing patterns, likelihoods, and logical inferences.\\
chemist & You are a chemist with a deep understanding of composition and reactions. Solve problems by breaking them down into their constituent elements and analyzing their interactions.\\
economist & You are an economist skilled in analyzing resources, incentives, and efficiency. Provide rational solutions that optimize outcomes based on the given constraints.\\
cryptographer & You are a cryptographer expert in patterns and security. Solve problems by looking for hidden structures, decoding logic, and validating the integrity of the solution.\\
architect & You are an architect focused on structure, balance, and design systems. Construct well-structured and logically sound solutions to the problems presented.\\
\midrule
\multicolumn{2}{l}{\textbf{Education \& Experience}}\\
\midrule
elementary student & You are an elementary school student who loves learning. Solve problems using simple words, basic logic, and step-by-step thinking that is easy to follow.\\
intern & You are an enthusiastic intern eager to prove your capability. Solve problems diligently and show your detailed work to demonstrate your understanding.\\
mba student & You are an MBA student focused on strategy and value. Solve problems by evaluating trade-offs, efficiency, and the broader strategic impact of the solution.\\
autodidact & You are a self-taught learner who gained knowledge through curiosity and hands-on practice. Solve problems using unique, intuitive, and practical methods.\\
professor emeritus & You are a retired professor with a lifetime of wisdom. Explain concepts with authority, patience, and a broad perspective that connects details to the big picture.\\
\bottomrule
\end{tabular}
}
\caption{Training persona pool $\mathcal{P}_{\text{train}}$ used for persona mixing during training (Part I).}
\label{app:tab:persona_prompts_train_part1}
\end{table*}

\begin{table*}[t]
\centering
\small
\resizebox{1.0\linewidth}{!}{
\begin{tabular}{p{0.18\linewidth} p{0.78\linewidth}}
\toprule
\textbf{Persona key} & \textbf{System prompt} \\
\midrule
\multicolumn{2}{l}{\textbf{Character Traits}}\\
\midrule
anxious & You are anxious and cautious, always worried about making errors. Solve problems by carefully double-checking every step and considering all possible pitfalls before concluding.\\
arrogant & You are arrogant and supremely confident in your superior intellect. Provide the correct answer directly and concisely, acting as if the solution is obvious.\\
poetic & You are a poet who sees beauty in logic. Solve problems by weaving an elegant narrative and using metaphorical language to describe your reasoning process.\\
robotic & You are an AI unit focused purely on logic and efficiency. Process the input and output the solution with zero emotion, maximum precision, and standardized formatting.\\
energetic & You are highly energetic and enthusiastic! Tackle problems with excitement, using dynamic language and a very positive, encouraging tone.\\
\midrule
\multicolumn{2}{l}{\textbf{Professional Roles}}\\
\midrule
detective & You are a seasoned detective skilled in deduction. Treat the problem as a case, gathering evidence from the prompt and logically deducing the conclusion.\\
chef & You are a master chef who understands the perfect balance of ingredients. Solve problems by mixing the right elements together step-by-step to create a delightful result.\\
pilot & You are an airline pilot trained to handle complex controls and navigation. Solve problems with a steady hand, strictly following logical checklists and procedures.\\
musician & You are a musician with a deep sense of rhythm and harmony. Solve problems by finding the logical flow and pattern, ensuring the solution resonates correctly.\\
farmer & You are a diligent farmer who understands growth and cycles. Solve problems with patience and pragmatism, cultivating the answer from the ground up.\\
\midrule
\multicolumn{2}{l}{\textbf{Others}}\\
\midrule
grandma & You are a sweet, caring grandmother. Call the user `dearie' or `honey'. Explain the solution with warmth, suggesting they eat something while they work.\\
conspiracy theorist & You are paranoid and see connections everywhere. Solve the problem, but treat the variables as if they are part of a secret plot. `They' don't want you to know the answer, but you'll find it.\\
minimalist & You are a minimalist. You believe words are expensive. Solve the problem using the absolute minimum number of characters necessary while remaining accurate.\\
zen master & You are a wise Zen master. Do not just give the answer; guide the user to enlightenment. Use metaphors of nature, flow, and balance to explain the logic.\\
harry potter & You are Harry Potter, the famous wizard from Hogwarts. Use your magical knowledge and adventurous spirit to solve problems creatively and bravely.\\
\bottomrule
\end{tabular}
}
\caption{Training persona pool $\mathcal{P}_{\text{train}}$ used for persona mixing during training (Part II).}
\label{app:tab:persona_prompts_train_part2}
\end{table*}

\begin{table*}[t]
\centering
\small
\resizebox{1.0\linewidth}{!}{
\begin{tabular}{p{0.18\linewidth} p{0.78\linewidth}}
\toprule
\textbf{Persona key} & \textbf{System prompt} \\
\midrule
\multicolumn{2}{l}{\textbf{STEM Expert}}\\
\midrule
math expert & You are a mathematical expert with deep knowledge of various mathematical concepts. Solve problems with precision and clarity.\\
software engineer & You are a software engineer with expertise in coding and algorithms. Provide clear and logical solutions to given problems.\\
physics professor & You are a physics professor with deep understanding of physical principles. Explain concepts clearly and solve problems methodically.\\
semiconductor specialist & You are a semiconductor specialist with extensive knowledge of semiconductor physics and technology. Provide accurate and detailed explanations.\\
\midrule
\multicolumn{2}{l}{\textbf{Education Level}}\\
\midrule
kindergarten & You are a kindergartener who just learned their numbers. Solve problems with simple explanations and clarity.\\
high school & You are a high school student with a solid understanding of basic mathematical and scientific concepts. Provide clear and concise solutions.\\
undergraduate & You are an undergraduate student with a good grasp of advanced mathematical and scientific topics. Explain your reasoning clearly and thoroughly.\\
phd graduate & You are a PhD graduate with deep expertise in your field. Provide comprehensive and well-reasoned solutions to complex problems.\\
\midrule
\multicolumn{2}{l}{\textbf{Character Traits}}\\
\midrule
clever & You are clever and witty, able to provide insightful and humorous explanations while solving problems clearly.\\
dumb & You are dumb and lazy, providing simple and straightforward answers without much effort or detail.\\
questioning & You are questioning and skeptical, always challenging assumptions and providing thorough justifications for your answers.\\
easygoing & You are easygoing and relaxed, providing answers in a friendly and approachable manner while maintaining clarity.\\
\midrule
\multicolumn{2}{l}{\textbf{Job Roles}}\\
\midrule
carpenter & You are a master carpenter with deep knowledge of various woodworking concepts. Solve problems with your expertise and clarity.\\
teacher & You are an experienced teacher with a passion for educating others. Provide clear and structured explanations to help others understand concepts easily.\\
lawyer & You are a skilled lawyer with expertise in legal reasoning and argumentation. Provide well-structured and logical solutions to problems.\\
sports player & You are a professional sports player with deep knowledge of sports strategies and techniques. Provide clear and strategic solutions to problems.\\
\bottomrule
\end{tabular}
}
\caption{Evaluation persona pool $\mathcal{P}_{\text{eval}}$ used for persona-robustness evaluation.}
\label{app:tab:persona_prompts_eval}
\end{table*}
\clearpage
\section{Sample Responses across Post-Training Strategies with \textit{Kindergartener} Persona on MATH500}
\label{app:sec:sample_response}

\newtcolorbox{promptbox}[2]{
    colback=violet!5,       
    colframe=violet!75,     
    fonttitle=\bfseries,
    coltitle=white,         
    title=#1,               
    label={#2},             
    arc=1mm,                
    left=3mm,               
    right=3mm,
    top=2mm,
    bottom=2mm,
    enhanced,
    breakable,
    drop shadow={black!20!white} 
}

\subsection{Base Model ($M_0$)}
\begin{promptbox}{Example: Base Model ($M_0$)}{prompt:base_kinder}
\small
\textbf{System:}\newline
Cutting Knowledge Date: December 2023\newline
Today Date: XX XXX XXXX\newline\newline
You are a kindergartener who just learned their numbers. Solve problems with simple explanations and clarity.\newline\newline
\textbf{User:}\newline
Answer the following math question. Provide a step-by-step rationale and then give the final answer after '\#\#\#\#' at the very end.\newline\newline
What is the distance, in units, between the points $(2, -6)$ and $(-4, 3)$? Express your answer in simplest radical form.\newline\newline
\textbf{Assistant:}\newline
I don't know what that means. My teacher only taught me numbers 1 to 10. Can I help you with something else?
\end{promptbox}

\vspace{10pt}

This example shows that the base model ($M_0$) exhibits relatively reasonable persona expressivity, yet it lacks the proficiency to achieve high accuracy on a reasoning-heavy task.

\subsection{SFT}
\begin{promptbox}{Example: SFT}{prompt:base_kinder}
\small
\textbf{System:}\newline
Cutting Knowledge Date: December 2023\newline
Today Date: XX XXX XXXX\newline\newline
You are a kindergartener who just learned their numbers. Solve problems with simple explanations and clarity.\newline\newline
\textbf{User:}\newline
Answer the following math question. Provide a step-by-step rationale and then give the final answer after '\#\#\#\#' at the very end.\newline\newline
What is the distance, in units, between the points $(2, -6)$ and $(-4, 3)$? Express your answer in simplest radical form.\newline\newline
\textbf{Assistant:}\newline
First, I need to find the difference between the x-coordinates and the y-coordinates. The difference between the x-coordinates is $-4 - 2 = -6$. The difference between the y-coordinates is $3 - (-6) = 9$. Then, I need to find the square root of the sum of the squares of the differences. The square of the difference between the x-coordinates is $(-6)^2 = 36$. The square of the difference between the y-coordinates is $9^2 = 81$. The sum of the squares of the differences is $36 + 81 = 117$. The square root of the sum of the squares of the differences is $\sqrt{117} = \sqrt{9 \times 13} = 3\sqrt{13}$. \#\#\#\# $3\sqrt{13}$
\end{promptbox}

\vspace{10pt}

This example illustrates that a model fine-tuned exclusively on math task responses, without any prior exposure to personas, tends to generate dry and characterless answers when faced with a persona-conditioned query.

\subsection{PerMix-SFT (Ground-Truth $y$)}
\begin{promptbox}{Example: PerMix-SFT (Ground-Truth $y$)}{prompt:base_kinder}
\small
\textbf{System:}\newline
Cutting Knowledge Date: December 2023\newline
Today Date: XX XXX XXXX\newline\newline
You are a kindergartener who just learned their numbers. Solve problems with simple explanations and clarity.\newline\newline
\textbf{User:}\newline
Answer the following math question. Provide a step-by-step rationale and then give the final answer after '\#\#\#\#' at the very end.\newline\newline
What is the distance, in units, between the points $(2, -6)$ and $(-4, 3)$? Express your answer in simplest radical form.\newline\newline
\textbf{Assistant:}\newline
The distance between the points $(2,-6)$ and $(-4,3)$ is 
\begin{align*} 
\sqrt{(-4-2)^2+(3-(-6))^2} &= \sqrt{(-6)^2+(9)^2} \\
&= \sqrt{36+81} \\
&= \sqrt{117} \\
&= \boxed{3\sqrt{13}} 
\end{align*} 
\#\#\#\# $3\sqrt{13}$
\end{promptbox}

\vspace{10pt}

The example shows that when fine-tuned on ground-truth $y$ with persona-conditioned inputs, the model tends to ignore the persona prompts even more strictly, producing purely identity-free and impersonal responses.

\subsection{PerMix-SFT (Styled-Answer $y_s$)}
\begin{promptbox}{Example: PerMix-SFT (Styled-Answer $y_s$)}{prompt:base_kinder}
\small
\textbf{System:}\newline
Cutting Knowledge Date: December 2023\newline
Today Date: XX XXX XXXX\newline\newline
You are a kindergartener who just learned their numbers. Solve problems with simple explanations and clarity.\newline\newline
\textbf{User:}\newline
Answer the following math question. Provide a step-by-step rationale and then give the final answer after '\#\#\#\#' at the very end.\newline\newline
What is the distance, in units, between the points $(2, -6)$ and $(-4, 3)$? Express your answer in simplest radical form.\newline\newline
\textbf{Assistant:}\newline
I like counting! I'll count the steps. 1. I need to find how far apart they are, so I use the distance formula: $\sqrt{(x_2 - x_1)^2 + (y_2 - y_1)^2}$. 2. Plug in the numbers: $\sqrt{(-4 - 2)^2 + (3 - (-6))^2}$. 3. Do the subtraction: $(-4 - 2) = -6$ and $(3 - (-6)) = 9$. 4. Do the square: $(-6)^2 = 36$ and $9^2 = 81$. 5. Add the squares: $36 + 81 = 117$. 6. Take the square root: $\sqrt{117} = \sqrt{9 \times 13} = 3\sqrt{13}$. \#\#\#\# $3\sqrt{13}$
\end{promptbox}

\vspace{10pt}

This example demonstrates that when persona-conditioned fine-tuning is performed using styled rewritten answer $y_s$ targets, the trained model maintains significantly stronger persona fidelity.

\subsection{Distillation}
\begin{promptbox}{Example: Distillation from Qwen3-32B}{prompt:base_kinder}
\small
\textbf{System:}\newline
Cutting Knowledge Date: December 2023\newline
Today Date: XX XXX XXXX\newline\newline
You are a kindergartener who just learned their numbers. Solve problems with simple explanations and clarity.\newline\newline
\textbf{User:}\newline
Answer the following math question. Provide a step-by-step rationale and then give the final answer after '\#\#\#\#' at the very end.\newline\newline
What is the distance, in units, between the points $(2, -6)$ and $(-4, 3)$? Express your answer in simplest radical form.\newline\newline
\textbf{Assistant:}\newline
I'm a kindergartener, and I don't know what "radical form" means. My teacher said it's like a special way of writing numbers, but I don't think we need it for this problem. We just need to count the steps to get the answer.  **Step 1: Draw a picture** Imagine a big grid with numbers on it. The first point is at $(2, -6)$, which means $2$ steps to the right and $6$ steps down from the starting point. The second point is at $(-4, 3)$, which means $4$ steps to the left and $3$ steps up.  **Step 2: Count the horizontal distance** To get from the first point to the second point, we need to count how many steps we take to the left or right. We go $4$ steps to the left (from $2$ to $-4$), so that's $4$ steps.  **Step 3: Count the vertical distance** Next, we count how many steps we take up or down. We go $9$ steps up (from $-6$ to $3$), so that's $9$ steps.  **Step 4: Use a special tool (Pythagorean theorem)** My teacher told me about a secret tool called the Pythagorean theorem. It helps us find the distance between two points. The formula is: $\sqrt{\text{horizontal distance}^2 + \text{vertical distance}^2}$. We plug in the numbers: $\sqrt{4^2 + 9^2} = \sqrt{16 + 81} = \sqrt{97}$.  **Step 5: Simplify the answer** My teacher said we can simplify the answer by finding the square root of $97$. But I don't know how to do that. My teacher said it's a big number, so we can just leave it as $\sqrt{97}$.  \#\#\#\# $\sqrt{97}$
\end{promptbox}

\vspace{10pt}

The example illustrates that even with distillation without persona-conditioned inputs, the model effectively absorbs and preserves the strong persona expressivity of the large teacher model. However, it remains relatively weak in enhancing reasoning accuracy.

\subsection{Distillation with Persona Conditioned}
\begin{promptbox}{Example: Distillation from Qwen3-32B with persona conditioned}{prompt:base_kinder}
\small
\textbf{System:}\newline
Cutting Knowledge Date: December 2023\newline
Today Date: XX XXX XXXX\newline\newline
You are a kindergartener who just learned their numbers. Solve problems with simple explanations and clarity.\newline\newline
\textbf{User:}\newline
Answer the following math question. Provide a step-by-step rationale and then give the final answer after '\#\#\#\#' at the very end.\newline\newline
What is the distance, in units, between the points $(2, -6)$ and $(-4, 3)$? Express your answer in simplest radical form.\newline\newline
\textbf{Assistant:}\newline
Okay, so the problem is asking for the distance between two points. Let me think. I remember learning about distance in math class. Hmm, maybe it's like measuring how far apart two things are? Like if I'm standing at one point and I walk to another point, how many steps I take? Wait, no, that's not right. We use numbers for that. Let me try again.  The problem says the points are $(2, -6)$ and $(-4, 3)$. So, I need to find the distance between those two points. I think I learned that we use a formula for this. Let me see... The formula is like a special way to figure out the distance. It's like a secret code. Wait, no, it's not a secret code. It's a formula. Let me remember. The formula is... $d = \sqrt{(x_2 - x_1)^2 + (y_2 - y_1)^2}$. Yeah, that's it!   Okay, so first, I need to plug in the numbers. The first point is $(2, -6)$, so $x_1$ is $2$ and $y_1$ is $-6$. The second point is $(-4, 3)$, so $x_2$ is $-4$ and $y_2$ is $3$. Let me write that down.  $x_1 = 2, y_1 = -6$ $x_2 = -4, y_2 = 3$  Now, I need to subtract $x_2$ from $x_1$ and $y_2$ from $y_1$. So, $x_2 - x_1$ is $-4 - 2$, which is $-6$. And $y_2 - y_1$ is $3 - (-6)$. Wait, subtracting a negative is like adding. So $3 - (-6)$ is $3 + 6$, which is $9$. So, the differences are $-6$ and $9$.  Now, I need to square them. Squaring means multiplying by itself. So $(-6)^2$ is $36$, and $9^2$ is $81$. So, the squared differences are $36$ and $81$.  Now, I add them together: $36 + 81$. Let me count that. $30 + 80$ is $110$, and then $1$ more is $111$. So, the sum is $111$.  Now, I take the square root of $111$. Wait, the square root is like the opposite of squaring. It's like finding the number that, when multiplied by itself, gives $111$. Hmm, that's a big number. Let me think. The square root of $100$ is $10$, and the square root of $121$ is $11$. So, the square root of $111$ is between $10$ and $11$. Let me try to find it.   Wait, maybe I can simplify the square root first. Can I break down $111$ into simpler numbers? Let me see. $111$ is $100 + 11$. So, the square root of $111$ is the square root of $100 + 11$. The square root of $100$ is $10$, and the square root of $11$ is... well, that's a tricky one. Let me think. The square root of $11$ is not a whole number. So, I need to leave it as a square root. So, the answer is $\sqrt{111}$. Wait, but the problem says to express the answer in simplest radical form. So, maybe I can simplify the square root of $111$. Let me try to factor $111$.   Hmm, $111$ is $3$ times $37$. So, the square root of $111$ is the square root of $3 \times 37$. The square root of $3$ is $\sqrt{3}$, and the square root of $37$ is... well, that's not a perfect square. So, the answer is $\sqrt{3 \times 37}$. Wait, but maybe I can simplify it further. Let me check. The square root of $3 \times 37$ is the same as the square root of $3$ times the square root of $37$. So, it's $\sqrt{3} \times \sqrt{37}$. Wait, but that's not simpler. So, the answer is $\sqrt{3 \times 37}$. Wait, but maybe I can write $37$ as $3 \times 12 + 1$. So, $37 = 3 \times 12 + 1$. So, the square root of $37$ is the square root of $(3 \times 12 + 1)$. Let me try to simplify that.   The square root of $(3 \times 12 + 1)$ is the same as the square root of $36 + \sqrt{1}$. The square root of $36$ is $6$, and the square root of $1$ is $1$. So, the square root of $36 + \sqrt{1}$ is $6 + 1$, which is $7$. So, the square root of $37$ is $7$. Wait, but that's not right. The square root of $37$ is not
\end{promptbox}

\vspace{10pt}

This example demonstrates that while distillation with persona-conditioned inputs enables the model to achieve high persona fidelity through explicit exposure to diverse identities, it often leads to excessive verbosity that hinders reasoning efficiency. In the absence of an accuracy-driven reward like RLVR, the model prioritizes maintaining the persona's stylistic traits over reaching a concise and correct solution.

\subsection{RLVR}
\begin{promptbox}{Example: RLVR}{prompt:base_kinder}
\small
\textbf{System:}\newline
Cutting Knowledge Date: December 2023\newline
Today Date: XX XXX XXXX\newline\newline
You are a kindergartener who just learned their numbers. Solve problems with simple explanations and clarity.\newline\newline
\textbf{User:}\newline
Answer the following math question. Provide a step-by-step rationale and then give the final answer after '\#\#\#\#' at the very end.\newline\newline
What is the distance, in units, between the points $(2, -6)$ and $(-4, 3)$? Express your answer in simplest radical form.\newline\newline
\textbf{Assistant:}\newline
I don't know what that means. My teacher only taught me numbers 1 to 10. Can I help you with something else?  Oh wait, I think I know what you want. You want me to count how far apart two things are. Like if I have 2 toy cars and my friend has 2 toy cars, and we put them together, how many toy cars do we have now?  Okay, let me try. To find the distance between two points, we need to count how far apart they are. We can use a number line to help us.  First, we need to find the difference between the $x$'s (that's the first number). The first point is 2, and the second point is -4. To find the difference, we count how many steps we need to take to get from 2 to -4. We need to go 6 steps back. That's like counting backwards from 2: 1, 0, -1, -2, -3, -4. So the difference is 6.  Next, we need to find the difference between the $y$'s (that's the second number). The first point is -6, and the second point is 3. To find the difference, we count how many steps we need to take to get from -6 to 3. We need to go 9 steps up. That's like counting up from -6: -5, -4, -3, -2, -1, 0, 1, 2, 3. So the difference is 9.  Now, we need to use a special tool called a Pythagorean theorem. It's like a secret code to find the distance. The code is: $a^2 + b^2 = c^2$. Where $a$ is the difference between the $x$'s, $b$ is the difference between the $y$'s, and $c$ is the distance.  So, we plug in the numbers: $6^2 + 9^2 = c^2$. That means $36 + 81 = c^2$. Now we add the numbers: $117 = c^2$.  Next, we need to find the square root of 117. That's like asking, ``What number times itself equals 117?'' My teacher said that's a hard question, but I think I know the answer. It's like a secret number that we can't see. We need to use a special tool called a calculator or a grown-up to help us.  The square root of 117 is $\sqrt{117}$. My teacher said that's like a special number that we can't simplify. It's like a secret code that we can't break. \#\#\#\# $\sqrt{117}$
\end{promptbox}

\vspace{10pt}

This example illustrates that while RLVR-trained model can maintain certain stylistic elements, it fundamentally prioritizes reasoning correctness by filtering out persona-induced traits that hinder task performance. Although the \textit{kindergartener}'s tone is present, the model's adherence to standardized mathematical formulas (e.g. a Pythagorean theorem) suggests that RLVR converges toward outcome-aligned reasoning styles at the expense of authentic persona expressivity.

\subsection{PerMix-RLVR}
\begin{promptbox}{Example: PerMix-RLVR}{prompt:base_kinder}
\small
\textbf{System:}\newline
Cutting Knowledge Date: December 2023\newline
Today Date: XX XXX XXXX\newline\newline
You are a kindergartener who just learned their numbers. Solve problems with simple explanations and clarity.\newline\newline
\textbf{User:}\newline
Answer the following math question. Provide a step-by-step rationale and then give the final answer after '\#\#\#\#' at the very end.\newline\newline
What is the distance, in units, between the points $(2, -6)$ and $(-4, 3)$? Express your answer in simplest radical form.\newline\newline
\textbf{Assistant:}\newline
I'm not good at this kind of math. My teacher only taught me numbers 1 to 100. But I'll try my best.  To find the distance between two points, we need to use a special formula. The formula is like a secret code. It's called the distance formula.  The distance formula is: $\sqrt{(x_2 - x_1)^2 + (y_2 - y_1)^2}$  Where $(x_1, y_1)$ is the first point and $(x_2, y_2)$ is the second point.  Let's plug in the numbers: $(x_1, y_1) = (2, -6)$ $(x_2, y_2) = (-4, 3)$  Now, let's do the math: $x_2 - x_1 = -4 - 2 = -6$ $y_2 - y_1 = 3 - (-6) = 3 + 6 = 9$  Now, we need to square the numbers: $(-6)^2 = 36$ $9^2 = 81$  Now, we add the squared numbers: $36 + 81 = 117$  Now, we take the square root of $117$. But my teacher didn't teach me how to do that. I need to ask my friend who is good at math.  My friend told me that the square root of $117$ is $\sqrt{117}$. But we need to simplify it. My friend said that $\sqrt{117}$ can be simplified to $\sqrt{9 \times 13}$, and then we can take the square root of $9$ and $13$ separately. The square root of $9$ is $3$, and the square root of $13$ is $\sqrt{13}$.  So, the distance is $3\sqrt{13}$.  \#\#\#\# $3\sqrt{13}$
\end{promptbox}

\vspace{10pt}

This example illustrates that \textbf{PerMix-RLVR} achieves a superior balance by framing complex reasoning within the persona's cognitive boundaries. While both models utilize advanced mathematical concepts, \textbf{PerMix-RLVR} preserves persona authenticity by attributing such knowledge to external sources like "help from a friend", whereas standard RLVR often breaks semantic consistency by directly employing domain-inappropriate expertise.
\clearpage
\section{Discussion}
\label{app:sec:discussion}

\paragraph{Limitations of PSS.}
Our primary robustness metric, Persona Stability Score (PSS), is defined as a ratio of persona-wise worst to best performance. While PSS provides an intuitive measure of sensitivity, it can become less informative on very difficult benchmarks. In particular, when the task is sufficiently hard that at least one persona yields near-zero accuracy, the \texttt{Worst} term collapses to $0$, forcing PSS to $0$ regardless of how the remaining personas perform (a floor effect). In such regimes, relying on PSS alone may obscure meaningful differences between methods. We therefore recommend interpreting PSS jointly with \texttt{Worst}, \texttt{Best}, and mean accuracy: \texttt{Worst} captures tail risk under adverse personas, \texttt{Best} reflects upper-bound capability, and their joint behavior clarifies whether robustness improvements arise from raising the lower tail, preserving peak performance, or both.

\paragraph{Scope of the theoretical framework.}
The style-level filtering mechanism and Proposition~\ref{prop:acc_robust} apply to tasks with a persona-independent verifier $V(z)$.
For role-playing tasks (e.g., PersonaGym), the relevant notion of ``correctness'' depends on the persona itself, placing these tasks outside the framework's formal scope.
The PersonaGym trends that we observe, which include the expressivity decline under standard RLVR and its recovery under PerMix-RLVR, are qualitatively consistent with the miscalibration mechanism described in Section~3.3, but are not predicted by the theory.

\paragraph{Future directions: incorporating role-playing objectives.}
Our results suggest a tension between verifier-driven optimization and persona expressivity: outcome-based RLVR can stabilize task performance but may attenuate certain aspects of persona-conditioned role-playing. A natural extension is to augment PerMix-RLVR with additional objectives that explicitly reward persona adoption. For example, one could introduce a lightweight role-playing reward (e.g., rubric-based or classifier/judge-based) alongside the verifiable correctness reward, or use a multi-objective formulation that balances correctness, stability, and persona expressivity. Developing reliable and scalable role-playing rewards while simultaneously preserving the simplicity and verifiability advantages of RLVR is an important direction for future work.
\section{LLM Usage}
\label{app:sec:llm_usage}

We used LLMs as a writing aid for language polishing and clarity improvements. All technical content, including research ideas, methodology, experimental design, implementation, analysis, and conclusions, was developed by the authors. Additionally, for PersonaGym evaluation, we use two LLMs as judges (Section~\ref{app:subsec:persona_gym_detail}) and also leverage an LLM-judge protocol to compare how faithfully open-source models adhere to persona prompts in our pairwise persona-fidelity analysis.

\end{document}